\documentclass{article}

\usepackage{microtype}
\usepackage{graphicx}
\usepackage{subfigure}
\usepackage{booktabs} 
\usepackage[table]{xcolor}
\usepackage{tikz}
\usepackage{adjustbox}

\usepackage{natbib}
\definecolor{lightergray}{rgb}{0.9, 0.9, 0.9}
\definecolor{evenlightergray}{rgb}{0.95, 0.95, 0.95}
\usepackage{hyperref}

\usepackage{boldline}
\usepackage{wrapfig}

\usepackage[final]{neurips_2024}

\usepackage{amsmath}
\usepackage{amssymb}
\usepackage{mathtools}
\usepackage{amsthm}
\usepackage{thm-restate}
\usepackage{algorithmic}

\newtheoremstyle{mystyle} 
  {10pt}  
  {7pt}   
  {}      
  {}      
  {\bfseries} 
  {.}     
  {.5em}  
  {}      
  
\theoremstyle{mystyle} 

\usepackage[capitalize,noabbrev]{cleveref}

\usepackage{graphicx}
\usepackage{float}
\usepackage{caption}
\usepackage{multirow, multicol}
\usepackage{tabularx}
\usepackage{bbm}
\usepackage{graphicx}
\usepackage[font={small,it}]{caption}

\setlength{\abovecaptionskip}{15pt}

\makeatletter
\setlength{\parindent}{0pt}
\addtolength{\partopsep}{-2mm}
\setlength{\parskip}{5pt plus 1pt}
\allowdisplaybreaks
\makeatother

\usepackage{hyperref}
\usepackage{url}

\usepackage{enumitem}

\usepackage{bbm}

\usepackage{comment}

\usepackage{latexsym,graphicx}
\usepackage{amsthm,amsmath,amssymb,enumitem}
\usepackage{empheq}
\usepackage{xspace}
\usepackage{bm}
\usepackage{ifpdf}
\usepackage{xcolor}
\usepackage{algorithm}
\usepackage{nicefrac}
\usepackage{wrapfig}

\allowdisplaybreaks

\definecolor{Darkblue}{rgb}{0,0,0.4}
\definecolor{Brown}{cmyk}{0,0.81,1.,0.60}
\definecolor{Purple}{cmyk}{0.45,0.86,0,0}
\usepackage{hyperref}
\hypersetup{colorlinks=true,
           citebordercolor={.6 .6 .6},linkbordercolor={.6 .6 .6},%
citecolor=Darkblue,urlcolor=black,linkcolor=red}
\newcommand{\lref}[2][]{\hyperref[#2]{#1~\ref*{#2}}}
\usepackage{cleveref}

\newtheorem{theorem}{Theorem}[section]

\newtheorem{corollary}[theorem]{Corollary}

\numberwithin{algorithm}{section}

\newcommand{\junk}[1]{}
\newcommand{\ignore}[1]{}


\newcommand{\argmin}{\operatorname{argmin}}

\newcounter{note}[section]

\newcommand{\qedsymb}{\hfill{\rule{2mm}{2mm}}}

\newcommand{\initOneLiners}{%
    \setlength{\itemsep}{0pt}
    \setlength{\parsep }{0pt}
    \setlength{\topsep }{0pt}
}

\newcommand{\squishlist}{
 \begin{list}{$\bullet$}
  { \setlength{\itemsep}{0pt}
     \setlength{\parsep}{3pt}
     \setlength{\topsep}{3pt}
     \setlength{\partopsep}{0pt}
     \setlength{\leftmargin}{1.5em}
     \setlength{\labelwidth}{1em}
     \setlength{\labelsep}{0.5em} } }

\newcommand{\squishend}{
  \end{list}  }

\DeclarePairedDelimiterX{\infdivx}[2]{(}{)}{%
  #1\;\delimsize\|\;#2%
}

\clubpenalty10000
\widowpenalty10000

\setlist[enumerate]{itemsep=12.0pt,parsep=1.0pt,before={\parskip=10.0pt},leftmargin=0.75cm}

\title{Majority Kernels: An Approach to Leverage Big Model Dynamics for Efficient Small Model Training}

\usepackage{thmtools} 
\usepackage{thm-restate}
\usepackage{amsthm}

\usepackage{ulem}
\usepackage{color}

\newcommand{\wm}{\langle \widetilde{w} \rangle}
\newcommand{\wmp}{\langle \widetilde{w} \rangle_p}
\newcommand{\wmpi}[1]{\langle \widetilde{w}^{(#1)} \rangle_{p_{#1}}}
\newcommand{\we}{\widetilde{w}}

\newcommand{\thetae}{\tilde{\theta}}

\newcommand{\thetamp}{\bar{\theta}_p}
\newcommand{\thetam}{\bar{\theta}}

\author{
Hanna Mazzawi \\ Google Research, NY \\ mazzawi@google.com  \And  
Pranjal Awasthi$^1$ \\ Google Research, CA \\ pranjalawasthi@google.com \And
Xavi Gonzalvo$^1$ \\ Google Research, NY \\ xavigonzalvo@google.com \And 
Srikumar Ramalingam \\ Google Research, NY \\ rsrikumar@google.com 
}

\begin{document}

\setlength{\abovedisplayskip}{10pt} 

\maketitle
\addtocounter{footnote}{1}

\footnotetext{Equal contribution.}
\begin{abstract}

Recent breakthroughs and successful deployment of large language and vision models in a constrained environment predominantly follow a two phase approach. First, large models are trained to achieve peak performance, followed by a model shrinking method to meet hardware constraints; Methods like distillation, compression or quantization help leverage the highly performant large models to induce smaller performant ones. Formally, this can be seen as the problem of identifying an optimal model of size $n$ from a larger model of size $k \cdot n$, where $k > 1$ is the overparameterization factor. This paper explores the hypothesis that a single training run can simultaneously train a larger model for performance and derive a smaller model for deployment.

Our contribution is an effective architectural change, namely, {\it Majority Kernels} that is compatible with the main standard architectures such as multi-layer perceptrons (MLPs), Residual networks (ResNets), and Transformers. We demonstrate that applying our technique can modify the training dynamics resulting in performance gains across architectures and tasks while maintaining the inference performance consistent. Furthermore, our approach adds minimal overhead to the cost incurred (wall clock time) at training time. The proposed approach shows strong performance on a wide variety of datasets and models, even outperforming strong baselines such as distilled ensembles as well as combinatorial optimization methods based on submodular optimization.

\end{abstract}

\section{Introduction}


Overparametrized models have become a cornerstone in advancing deep learning, particularly when trained with first-order methods. The empirical evidence suggests that such models often demonstrate superior performance, a trend that persists without evident saturation points, assuming that data of sufficient quality and quantity is available~\citep{scaling_laws20}. The benefits extend beyond mere accuracy improvements; these large models enhance optimization stability and enable more robust generalization in diverse scenarios. This approach aligns with ongoing efforts to push the boundaries of deep learning capabilities through continued growth in model size.
This shift towards larger models is evident from the early influential works in computer vision with CNNs~\citep{krizhevsky2012imagenet,szegedy2015going} and ResNets~\citep{he2016deep} to the transformer architectures in language modeling~\citep{vaswani2017attention, brown2020language, chowdhery2022palm, chen2022pali}.




While larger models are pivotal for achieving peak performance in deep learning, their practical deployment, especially on resource-constrained devices like smartphones, necessitates consideration of the model's footprint. These devices impose limits on the number of parameters and the computational cost. This can be effectively addressed by a two-step process. First, train an overparameterized model; then, employ post-training techniques to compress and tailor the model to meet specific operational constraints. This strategy allows for the benefits of large models to be realized in environments with stringent resource limitations \citep{onceforall19}.
%
%
%
%
%
%
Techniques like model pruning and quantization are designed to create models with reduced memory needs, maintaining performance while fitting into more restricted environments~\citep{lecun1989optimal, han2015deep, frankle2018lottery, cai2020zeroq, nagel2020up}.
Similarly, model distillation focuses on training smaller models that can faithfully approximate a larger model (typically of the same architecture) \citep{Hinton06, buci2006model}. This approach of post-training optimization plays a vital role in making advanced models viable for everyday applications.


This paper introduces a novel concept, questioning whether it is possible to streamline the above two stage process into a single training run. We explore the feasibility of simultaneously conducting larger-scale training while also producing a smaller, immediately deployable model for inference. This concept involves increasing the size of a compact model in such a way that it incurs minimal additional training overhead, yet reverts to its original, smaller size for inference. 

A common method to increase the model size to achieve peak performance is ensembling (see Appendix~\ref{sec:related} for more related work). Given a base model \( f(x; \theta) \) with \( \theta \in \mathbb{R}^n \), an effective way to improve the model is by training $k$ copies independently and average their predictions. 
For the ensembling method, some techniques aim to mitigate the use of extra compute and memory (without needing a second stage to reduce the size),
\begin{itemize}
    \item Snapshot ensembling \cite{huang2017snapshot} removes the need to train $k$ independent copies of the base model $f(x;\theta)$, reducing the training compute to be somewhat comparable with the original model training requirements. However, this algorithm does not solve the increase in compute and memory for inference. 

    \item Mixture of experts \citep{shazeer2017outrageously}, An MOE creates a model with $k$ experts each structurally equivalent to $f(x;\theta)$. This effectively creates an overparameterized model with number of parameters approximately being 
    $k\cdot n$, and comparable compute. However, while computation cost per inference remains similar to the original model, memory requirements increase due to the larger parameter count.
    
    \item Bayesian neural networks (BNN) \citep{magris2023bayesian} can be seen as having multiple models \( f(x; \theta) \), where \( \theta \) is sampled from a posterior distribution \( p(\theta|\mathcal{D}) \) given the data~\( \mathcal{D} \). During training and inference, we draw samples of \( \theta \) to capture the uncertainty and variability in the model predictions. However, inference requires multiple runs of the model, exceeding the original compute cost. Additionally, while there is recent work about memory footprint efficiency~\citep{dusenberry20bnn}, regular BNNs have to store parameters and their uncertainties which can make the dimension of $\theta$ be larger than $2n$.
\end{itemize}


    

In this work we make the following contributions. 
\begin{itemize}
    \item We present \textit{Majority Kernels} (MK), a novel algorithm that increases the size of training models similarly to ensembling. When applied to a base model, this approach expands its parameters while maintaining the same inference compute and memory requirements as the original model, with only a minimal increase in training computation.
    \item We theoretically analyze the proposed algorithm through the lenses of implicit regularization. We demonstrate how the training dynamic change when applying our technique (in this context, training dynamics from an optimization perspective, involve analyzing how the model's parameters converge towards optimal values over time, influenced by factors such as learning rate, gradient behavior, and the chosen optimization algorithm).
    This helps us understand how the algorithm not only naturally limits the complexity of the model class but also seeks out more stable and generalizable solutions by focusing on flatter regions of the loss landscape. 
    \item We present an extended empirical analysis to explore the efficacy of our algorithm across different architectures and datasets, showcasing their role in facilitating implicit overparameterized training. We compare our algorithm with strong baselines such as
    \begin{itemize}
        \item Distilled ensembles - a baseline that requires orders of magnitude more compute.
        
        \item Combinatorial optimization methods - a baseline that tries to streamline the above mentioned two phases when the increase in capacity is done via the model's dimension.
    \end{itemize}
\end{itemize} 
Our algorithm is remarkably simple yet effective; It is particularly suitable for large language models due to its negligible compute overhead during training and its lack of impact on inference compute and memory.

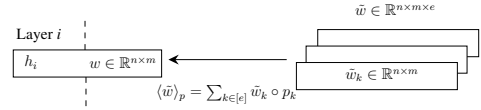
\begin{wrapfigure}{r}{0.45\textwidth}
    \centering
    \begin{adjustbox}{max width=1\linewidth}
    \tikzset{every picture/.style={line width=0.75pt},every node/.style={font=\large}} 

\begin{tikzpicture}[x=0.75pt,y=0.75pt,yscale=-1,xscale=1]

\draw  [fill={rgb, 255:red, 255; green, 255; blue, 255 }  ,fill opacity=1 ] (325,50) -- (491,50) -- (491,78) -- (325,78) -- cycle ;
\draw  [fill={rgb, 255:red, 255; green, 255; blue, 255 }  ,fill opacity=1 ] (313,67) -- (479,67) -- (479,95) -- (313,95) -- cycle ;
\draw   (11,71) -- (164,71) -- (164,99) -- (11,99) -- cycle ;
\draw  [fill={rgb, 255:red, 255; green, 255; blue, 255 }  ,fill opacity=1 ] (303,84) -- (469,84) -- (469,112) -- (303,112) -- cycle ;
\draw    (293,81) -- (174,81) ;
\draw [shift={(171,81)}, rotate = 360] [fill={rgb, 255:red, 0; green, 0; blue, 0 }  ][line width=0.08]  [draw opacity=0] (10.72,-5.15) -- (0,0) -- (10.72,5.15) -- (7.12,0) -- cycle    ;
\draw  [dash pattern={on 4.5pt off 4.5pt}]  (85,41) -- (85,71) ;
\draw  [dash pattern={on 4.5pt off 4.5pt}]  (86,99) -- (86,129) ;

\draw (21,75.4) node [anchor=north west][inner sep=0.75pt]    {$h_{i}$};
\draw (88,76.4) node [anchor=north west][inner sep=0.75pt]    {$w\in \mathbb{R}^{n\times m}$};
\draw (352,87.4) node [anchor=north west][inner sep=0.75pt]    {$\tilde{w}_{k} \in \mathbb{R}^{n\times m}$};
\draw (362,21.4) node [anchor=north west][inner sep=0.75pt]    {$\tilde{w} \in \mathbb{R}^{n\times m\times e}$};
\draw (157,105.4) node [anchor=north west][inner sep=0.75pt]    {$\langle \tilde{w} \rangle _{p} =\sum _{k\in [e]}\tilde{w}_{k} \circ p_{k}$};
\draw (13,48) node [anchor=north west][inner sep=0.75pt]   [align=left] {Layer \textit{i}};

\end{tikzpicture}
    \end{adjustbox}
    \caption{The majority kernels for the $i$-th layer.}
    \label{fig:mk_image}
\end{wrapfigure}

\section{The Majority Kernels Algorithm}
In this section we introduce the Majority Kernels (MK) algorithm.
This approach involves training each layer of a Deep Neural Network (DNN) with an expanded version of their internal kernels. During training, MK aggregates these expanded kernels by randomly averaging extra parameters into the layer's original dimensions. At the inference stage, the kernel reverts to the average of these expanded versions.

It is well understood that ensembling of models often produces a model that is superior and more robust as compared to the base models \citep{huang2017snapshot, fort1912deep}.
Consider a model \( f: {\cal X} \rightarrow \mathbb{R}^k \), where \( {\cal X} \subseteq \mathbb{R}^d \) represents the domain of the data distribution. Define \( f \) as an ensemble of \( e \) different models: \( f(x;\theta) = \frac{1}{e} \sum_{i=1}^e f_i(x;\theta_i) \) for \( x \in {\cal X} \). Although typically ensembling is done in the model (function) space, if one could do ensembling in the parameter space itself then $f(x;\theta)$ would correspond to a powerful model that is also small in size. However, naive ensembling in the parameter space often performs poorly as the parameters of the different models might not be aligned along the same local optima. Our algorithm maintains parameter alignment by using stochastic weighted averaging throughout training, while using using average for inference. More formally,




Assume a multilayered model where each layer is defined as $h: \mathbb{R}^n \to \mathbb{R}^m$:
$$
h(x) = \phi(x w + b),
$$
where $w\in \mathbb{R}^{n\times m}$, $b\in\mathbb{R}^m$, and $\phi$ is a non-linear activation function like ReLU.

MK maintains the dimensionality of $h: \mathbb{R}^n \to \mathbb{R}^m$ but it uses an extended kernel $\we \in \mathbb{R}^{n\times m \times e}$ that allows the learning over an order of magnitude larger than the original kernel $w$ (see Figure~\ref{fig:mk_image}). The way the extended kernel $\we$ is used is as follows,
\begin{equation}
  \label{eq:w}
h(x) = \phi(x \wmp + b) \quad s.t. \quad \wmp = \sum_{k\in[e]} \we_{k} \circ p_{k},
\end{equation}
where $\circ$ indicates pointwise multiplication, $\we_k \in \mathbb{R}^{n\times m}$ refers to the $k$-th extended dimension of $\we$ and $p \in\mathbb{R}^{n\times m \times e}$ is the probability matrix used during training. The probability matrix is constructed as follows: For each pair \( (i, j) \), consider the vector \( (p_{ij1}, \ldots, p_{ije}) \) generated by drawing each component of \( p_{ijk} \) for $k\in [e]$ independently from an exponential distribution. We then normalize such that the sum of components equal 1 and set:
\begin{equation}
  \label{eq:normalized_p}
  (p_{ij1}, \ldots, p_{ije}) \gets \frac{1}{\sum_{k=1}^{e} p_{ijk}} (p_{ij1}, \ldots, p_{ije}).
\end{equation}

This approach guarantees that the extra parameters are leveraged during training by exploring a ball around the mean, while during inference, the theoretical mean is used. In that case (uniform \( p_{i,j} = \nicefrac{1}{e} \cdot \mathbf{1} \))\footnote{Note that $\mathbf{1}$ is a matrix with 1's in all positions.}, and for simplicity $\wm = \frac{1}{e} \sum_{k\in[e]} \we_{k}$.

\begin{algorithm}[h]
\caption{Majority kernels training algorithm.}
\begin{algorithmic}
\STATE For every layer $l$:  Initialize $\we\in\mathbb{R}^{n\times m \times e}$, $b\in\mathbb{R}^{m}$
\WHILE{$s < \mathrm{max\_steps}$}
  \STATE $B \gets $\texttt{NewBatch}()
  \STATE $p \gets \texttt{NormalizedExponentialRandom}()$ \algorithmiccomment{ Equation~\ref{eq:normalized_p}}
  \STATE \texttt{ForwardPass}($\langle \we \rangle$,$p$,$b$,$B$)
  \STATE $\we$, $b \gets \texttt{BackwardPass}(B)$
  \STATE $s+=1$
\ENDWHILE
\STATE return $\langle \we \rangle$, $b$ \algorithmiccomment{Return trained parameters for inference}
\end{algorithmic}\label{main_algo}
\end{algorithm}

Algorithm~\ref{main_algo} describes the training procedure for an architecture agnostic MK approach. The core idea is to use a stochastic version of the extended kernel $\langle \we \rangle_p$ during training, and a final version for inference using a uniform $p$ (i.e., using $\langle \we \rangle$).
\begin{table}
\begin{small}
\hspace{-18pt}\begin{tabular}{c|c|c|c|c}\toprule
    \textbf{Experiment type} & \textbf{Improvement} & \textbf{Vanilla cost} & \textbf{MK cost} & \textbf{Ensemble cost} \\\midrule
    Fully Connected (Cifar10) & +2.07\% in accuracy & 1.2 CPU & 1.4 CPU (+16.67\%) & 3.6 CPU (+300\%)\\
    Convolutional (Imagenet) & +1.01\% in accuracy & 11.15 TPU  &12.18 TPU (+9.24\%)& 33.45 TPU (+300\%) \\
    Transformers (T5) & +0.75\% on avg. Glue & 5.9 TPU & 5.95 TPU (+0.85\%) & 17.7 TPU (+300\%) \\ \bottomrule
\end{tabular}
\end{small}

\vspace{5pt}
\caption{This table summarizes the relative performance improvements (averaged over various settings) for the various domains or architectures compared to vanilla training. Additionally, it presents the cost (in hours) of our training algorithm (with expansion factor 3, i.e., the size of the model is 3 times larger) compared to regular training and compared to the cost of vanilla ensemble of size 3. In Section~\ref{sec:experiments}, we present full detailed results of the above and results against comparing with strong baselines.}
\vspace{-10pt}
\end{table}

\section{Implicit Gradient Regularization}
BEA has been used in many settings to uncover inductive training biases of various optimizers (e.g. gradient descent \cite{barrett2020implicit}, SGD \cite{smith2021on}; momentum \cite{ghosh2023implicit}; Adam and RMSProp \cite{cattaneo2023implicit_bias_adam}), various architectures (e.g., GANs \cite{rosca2021discretization}; diffusion models \cite{gao2023diffusion_bea}), and various training settings like continual-learning \cite{dherin2023implicit}, distributed and federated learning \cite{barba2021federated} and progressively growing networks \cite{deepfusion2024}.

As an example, in the paper \citep{barrett2020implicit}, using Backward Error Analysis (BEA), have demonstrated that for any loss function that is sufficiently differentiable ($L$), the process of gradient descent actually follows an adjusted loss surface, represented as $\tilde{L}$. This adjusted loss surface during training is defined by the as $\tilde{L}(\theta) = L(\theta) + \frac{\ell}{4} \lVert \nabla_\theta L(\theta) \rVert^2$ where $\ell$ is the learning rate.

In this equation, the additional term serves as a regularization component, promoting parameter points where the gradient is low, potentially indicating flatter minima.

In this section we analyze our algorithm using BEA to uncover inductive training biases. For MK, the parameters of the model are defined as $\thetamp=(\wmpi{1}, \ldots, \wmpi{L})$ and $\thetam$ when using uniform element-wise $p=\nicefrac{1}{e}$.
As in \citep{barrett2020implicit}, we define a vector field, in this case on the extended set of parameters $f(\thetamp)=-\nabla L(\thetamp)$ and $f(\thetam)=-\nabla L(\thetam)$. Our goal is to uncover the modified loss function in which the gradient decent algorithm follows to examine its biases.


\begin{restatable}[Backward Error Analysis]{theorem}{bea}
\label{th:bea}
Let $L$ be a sufficiently differentiable function on the parameter space $\theta \in \mathbb{R}^n$. The modified loss when using Majority Kernels is,
\begin{equation*}
\tilde{L}_{MK}(\thetam) = L(\thetam) + \frac{\ell}{4} \lVert \nabla L(\thetam) + \nabla^2 L(\thetam)\cdot\epsilon \rVert^2 + \nabla L(\thetam) \cdot \epsilon.
\end{equation*}
%
%
where $\epsilon \in \mathbb{R}^n$ is 
is the random perturbation of the virtual parameters.
\end{restatable}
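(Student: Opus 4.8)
The plan is to adapt the Backward Error Analysis (BEA) argument of \citet{barrett2020implicit} to the perturbed iterate produced by Majority Kernels. The key observation is that one step of gradient descent under MK is not a gradient step on $L$ at the nominal parameters $\thetam$, but rather a gradient step evaluated at a randomly perturbed point $\thetamp = \thetam + \epsilon$, where $\epsilon$ collects the (mean-zero in expectation, but for a fixed step nonzero) deviations of the stochastic weighted average $\wmp$ from the uniform average $\wm$. So I would first write the MK update explicitly as $\theta_{t+1} = \theta_t - \ell\,\nabla L(\theta_t + \epsilon)$, treating $\epsilon$ as a fixed vector for the purposes of the per-step analysis (the ``random perturbation of the virtual parameters'' in the statement).

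Next I would Taylor-expand $\nabla L(\theta_t + \epsilon) = \nabla L(\theta_t) + \nabla^2 L(\theta_t)\cdot\epsilon + O(\lVert\epsilon\rVert^2)$, so that the discrete update reads $\theta_{t+1} = \theta_t - \ell\big(\nabla L(\theta_t) + \nabla^2 L(\theta_t)\cdot\epsilon\big) + \text{h.o.t.}$ This identifies the effective vector field driving the dynamics as $\tilde f(\theta) = -\big(\nabla L(\theta) + \nabla^2 L(\theta)\cdot\epsilon\big)$, which can itself be written as the negative gradient of the modified potential $G(\theta) := L(\theta) + \nabla L(\theta)\cdot\epsilon$, since $\nabla G(\theta) = \nabla L(\theta) + \nabla^2 L(\theta)\cdot\epsilon$. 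Then I would invoke the standard BEA machinery verbatim: for a gradient flow $\dot\theta = -\nabla G(\theta)$, the explicit Euler discretization with step $\ell$ tracks, to first order in $\ell$, the flow of the modified loss $\tilde G(\theta) = G(\theta) + \frac{\ell}{4}\lVert\nabla G(\theta)\rVert^2$. Substituting $G$ and $\nabla G$ back in gives
\begin{equation*}
\tilde L_{MK}(\thetam) = L(\thetam) + \nabla L(\thetam)\cdot\epsilon + \frac{\ell}{4}\big\lVert \nabla L(\thetam) + \nabla^2 L(\thetam)\cdot\epsilon \big\rVert^2,
\end{equation*}
which is exactly the claimed expression (up to the reordering of terms). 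The first new term $\nabla L\cdot\epsilon$ is the zeroth-order effect of evaluating the gradient off-center, and the modification of the norm-squared penalty from $\lVert\nabla L\rVert^2$ to $\lVert\nabla L + \nabla^2 L\cdot\epsilon\rVert^2$ is the first-order correction.

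The main obstacle — and the point that needs the most care — is justifying the truncation: I am dropping $O(\lVert\epsilon\rVert^2)$ terms from the Taylor expansion of $\nabla L$ and $O(\ell^2)$ terms from the BEA expansion, and I need these to be consistent, i.e., the perturbation $\epsilon$ should be treated as the same order as (or small relative to) $\sqrt{\ell}$, or the statement should be read as the first-order modified equation in a joint expansion in $\ell$ and $\epsilon$. I would state this scaling assumption explicitly and note that $\epsilon$ depends on the layer structure and on $e$ (its magnitude shrinks as the number of copies $e$ grows, and its covariance is controlled by the exponential-sampling construction in \eqref{eq:normalized_p}), so that in the relevant regime the higher-order terms are genuinely negligible. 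A secondary technical point is that $\epsilon$ lives in the extended parameter space and must be pushed forward through the averaging map $\we \mapsto \wmp$ to act on $\thetam$; I would record that this pushforward is linear, so it does not disturb the Taylor/BEA bookkeeping, and simply denote the resulting object $\epsilon \in \mathbb{R}^n$ as in the statement. Everything else is a routine repetition of the computation in \citet{barrett2020implicit}.
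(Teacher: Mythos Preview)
Your proposal is correct and follows essentially the same approach as the paper: both apply the Backward Error Analysis of \citet{barrett2020implicit} to the perturbed update $\theta_{t+1}=\theta_t-\ell\,\nabla L(\theta_t+\epsilon)$ and Taylor-expand in $\epsilon$, dropping $O(\lVert\epsilon\rVert^2)$ terms. Your organization is slightly cleaner than the paper's: by first recognizing that (to first order in $\epsilon$) the perturbed gradient is itself a gradient, $\nabla L(\theta+\epsilon)\approx\nabla G(\theta)$ with $G(\theta)=L(\theta)+\nabla L(\theta)\cdot\epsilon$, you can invoke the known BEA correction $G\mapsto G+\tfrac{\ell}{4}\lVert\nabla G\rVert^2$ as a black box, whereas the paper re-derives the correction term $C_1$ for the perturbed field and only afterwards expands in $\epsilon$; the two computations are equivalent.
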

See Appendix \ref{sec:proof-main-app} for the proof of the theorem. Furthermore, in Appendix \ref{section:theory}, we further analyze our algorithm using the sharpness aware minimization, and the PAC frameworks. Additionally, this appendix proves Lemma~\ref{lemma:uniform_fallback} showing that without the random probabilities in the algorithm, we fall back on the small model's training dynamics, and lose the leverage of the extra parameters.

\section{Experiments}\label{sec:experiments}
In this section we present empirical results comparing our proposed method against a variety of baselines across various architectures (see the Appendix for more experimental results, including experiments on fully connected networks).

\begin{figure}[htb]
    \centering
    \hspace*{-1.7cm}\includegraphics[scale=0.21]{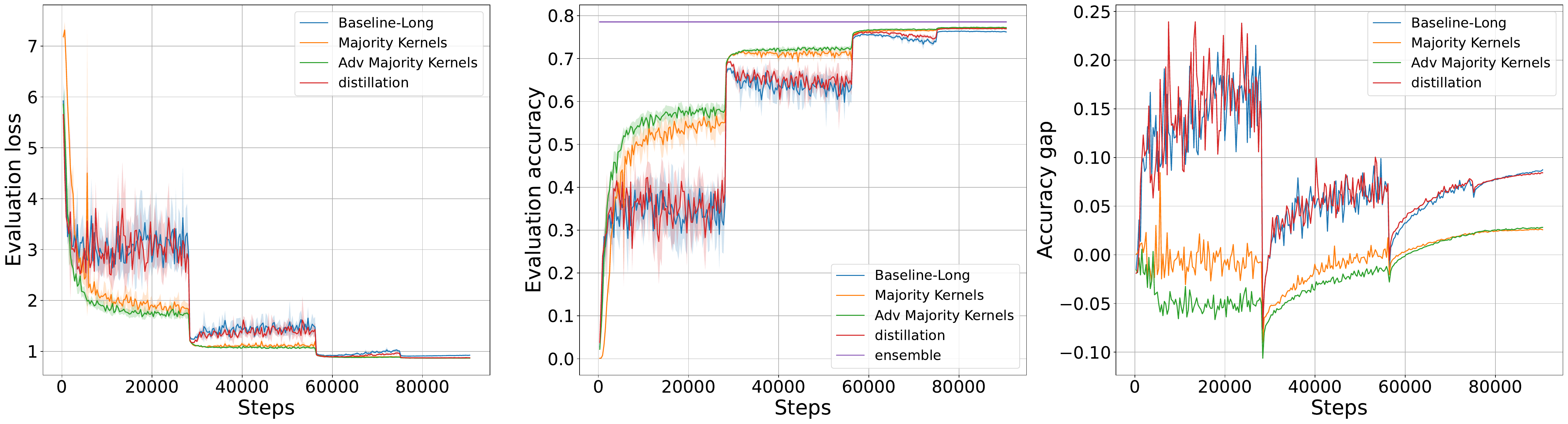}
    \caption{Test curves for \textsc{Baseline-Long} (blue), \textsc{Majority Kernels} (orange), \textsc{Adv-Majority Kernels} (green), \textsc{distilled-Baseline} (red) and between (purple). It is easy to see that the generalization gap (accuracy gap between train and test above) is better in our optimization throughout training with better final performance.\vspace{0.5cm}}
    \label{fig:sam_vs_majority} \vspace{-5pt}
\end{figure}
\begin{figure}[ht]
\begin{minipage}{1\textwidth}
    \centering
    \begin{tabular}{c|c}
    \toprule
         \textbf{Algorithm} & \textbf{Test accuracy} \\ \midrule
         \textsc{Baseline}  & $76.1$  \\
         \textsc{Baseline-long}  & $76.43 \pm 0.076$ \\
         \textsc{Majority Kernels}  & $\textbf{77.20}\pm 0.068$ \\
         \textsc{Adv-Majority Kernels} & $\textbf{~77.33}\pm 0.11$  \\
         \textsc{distilled-Baseline} & ~$77.05 \pm 0.080$  \\ \midrule
         \textsc{ensemble-Baseline} & $78.56\pm 0.041$  \\ \bottomrule
    \end{tabular}
    \vspace{5pt}
    \caption{Results on ImageNet for the various algorithms.}
    \label{tab:cnn_algorithm}
\end{minipage} \vspace{-15pt}
\end{figure}

\subsection{Convolutional Networks}
In this section, we present our results for running our experiments on Imagenet. We experiment with Imagenet on a ResNet50 architecture~\cite{he2016deep}.

In our experiment, we will compare the following algorithms,
\begin{itemize}
    \item \textbf{\textsc{Baseline}}. Training of ResNet50 based on the recipe in~\cite{he2016deep}.
    Training for~90 epochs with batches of size 256, SGD with momentum as an optimizer. Our base learning rate is~0.1 and we have step decay of 0.1 every 30 epochs, we use weight decay of~0.0001. \\We use the standard data augmentation for ImageNet while training: we crop a random segment of the image, and scale it to standard input size of $224\times224$, along with random horizontal flipping of the images. 
    \item \textbf{\textsc{Baseline-long}}. Similar to \textsc{Baseline} but trained for longer (just like majority kernels), that is, trained for~330 epochs, with learning rate drop at epochs 90, 180 and~240.
    \item \textbf{\textsc{Majority Kernels}}. The MK algorithm is implemented on the Baseline model. Due to MK's requirement for more steps to converge, we adopt the \textsc{Baseline-long} configuration. Our expansion factor is $e=3$.
    \item \textbf{\textsc{Adv-Majority Kernels}}. A modification of MK (with expansion factor $e=3$), where an adversarial element is injected into the random probability at each training step. Details in Appendix~\ref{sec:adv-majority}.
    \item \textbf{\textsc{ensemble-Baseline}}. Evaluating true ebsemble performance by training three different \textsc{Baseline-long} models and ensembling them.
    \item \textbf{\textsc{distilled-Baseline}}. \textsc{Baseline-long} includes an additional knowledge distillation loss during training, where
    \textsc{ensemble-Baseline} serves as the teacher.
\end{itemize}  
%
Table~\ref{tab:cnn_algorithm} summarizes the results, showing that our MK algorithm outperforms others in performance. Notably, MK exhibits the most effective balance between training and test performance, highlighting the benefits of its additional regularization and its implicit stochastic sharpness optimization behavior (Section \ref{section:theory}). These effects are evident in the train and test accuracy curves during training, as depicted in Figure~\ref{fig:sam_vs_majority}. Finally, our algorithm increases the training time on a TPU 4x4 by 9.24\% compared to at least 400\% increase of compute for the distilled baseline.
%

\subsection{Transformer Networks}



In this section, we apply MK to language tasks, focusing on fine-tuning downstream tasks using MK-enhanced pretrained T5 models. We expand a pretrained model's kernels into majority kernels by replicating them multiple times, forming our initial MK pretrained model. Using this model, we fine-tune on GLUE tasks with our MK optimization algorithm and present the results for various algorithms:
\begin{itemize}
    \item \textbf{\textsc{Baseline}}. Fine-tuning a pretrained T5 language model. This model is the T5 model with the configuration ``Small'', and is pretrained as described in \cite{t5_paper}.
    \item \textbf{\textsc{Majority Kernels}}. 
    Starting with a small T5 pretrained model (refer to \cite{t5_paper}), we convert its kernels into MK kernels with expansion factor~$e=3$. We then fine-tune it using our MK algorithm.
\end{itemize}
Table~\ref{tab:t5_algorithms} summarizes the results for GLUE language tasks. 
A full breakdown for the various tasks and some extra details can be found in Appendix~\ref{apdx:t5_experiment}. To make sure the performance is not tied to one pretrained checkpoint, for every run performed below we used a new pretrained checkpoint. Our training cost increase for T5 was negligable. Finally, the appendix holds comparison with Model Soups \cite{wortsman2022model} - a baseline the performs ensembling in the parameter space.

\begin{table}[h]
    \centering
    \begin{small}
    \begin{tabular}{c|c}
    \toprule
         \textbf{Algorithm} & \textbf{Glue average}  \\ \midrule
         \textsc{Baseline} & 80.3 $\pm$ 0.12 \\
         \textsc{Majority Kernels} & \textbf{80.9} $\pm$ 0.3 \\
    \bottomrule
    \end{tabular}
    \end{small}
    \vspace{5pt}
    \caption{Results on GLUE language tasks for our algorithm compared to vanilla training.}
    \label{tab:t5_algorithms}\vspace{-10pt}
\end{table}


Appendix~\ref{apdx:t5_experiment} contains figures illustrating the evaluation curves for various tasks during \textsc{Baseline} and \textsc{Majority Kernels} training. A key observation is that, unlike previous experiments where our algorithm required extra steps, in this case, it achieves higher performance more rapidly.
Regularization limits the model's complexity, discouraging it from fitting too closely to the training data of the primary task, which can include noise or irrelevant patterns. This ensures the model captures more general patterns, making it less tailored to the nuances of the primary task but more adaptable and easily fine-tuned for new, related tasks, as it has not over-committed to the specifics of the original training data.







\section{Conclusions}

The MK algorithm is a robust advancement in machine learning as it achieves computational efficiency during training, automates the distillation of inference parameters, and improves parameter weight optimization, which in turn smooths the training loss landscape to avoid local minima. Additionally, the MK algorithm's implicit regularization analysis reveals a beneficial second-order dependency in the modified loss, introducing stochastic regularization for smoother optimization and a bias towards flatter minima. This sharpness analysis indicates the algorithm averages out training randomness, leading to a stable and comprehensive representation of the model, particularly advantageous when capturing diverse data aspects.




\newpage


\bibliographystyle{unsrtnat}

\bibliography{references}

\appendix
\newpage

\section{Proof of Theorem~\ref{th:bea}}
\label{sec:proof-main-app}
\begin{restatable}[Backward Error Analysis]{theorem}{bea}
Let $L$ be a sufficiently differentiable function on the parameter space $\theta \in \mathbb{R}^n$. The modified loss when using Majority Kernels is,
\begin{equation*}
\tilde{L}_{MK}(\thetam) = L(\thetam) + \frac{\ell}{4} \lVert \nabla L(\thetam) + \nabla^2 L(\thetam)\cdot\epsilon \rVert^2 + \nabla L(\thetam) \cdot \epsilon.
\end{equation*}
%
%
where $\epsilon \in \mathbb{R}^n$ is 
is the random perturbation of the virtual parameters.
\end{restatable}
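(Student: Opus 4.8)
The plan is to follow the Backward Error Analysis (BEA) template of \citet{barrett2020implicit} (their Theorem 3.1), but carried out around the \emph{perturbed} iterate rather than the clean one. I would set up a vector field $f(\theta) = -\nabla L(\theta)$ and seek a modified vector field $\tilde f(\theta) = f(\theta) + \ell\, C_1(\theta) + \mathcal{O}(\ell^2)$ whose exact flow, sampled at integer multiples of the learning rate $\ell$, agrees to order $\ell^2$ with the discrete update actually performed by Majority Kernels. The key structural observation is that one MK step is \emph{not} an Euler step on $L$ at $\thetam$, but an Euler step taken at the randomly perturbed point $\thetamp = \thetam + \epsilon$, where $\epsilon$ is the fluctuation induced by the normalized-exponential probability matrix $p$ around its mean $\nicefrac{1}{e}$. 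So the discrete update to be matched is $\theta_{n+1} = \theta_n + \ell f(\thetam + \epsilon)$.

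First I would Taylor-expand the true solution of $\dot\theta = \tilde f(\theta)$ over one step: $\theta(t+\ell) = \theta(t) + \ell \tilde f + \tfrac{\ell^2}{2}\tilde f' \tilde f + \mathcal{O}(\ell^3)$, substitute the ansatz for $\tilde f$, and collect powers of $\ell$; this is exactly the bookkeeping in the (commented-out) draft and gives $C_1(\thetamp) = -\tfrac12 f'(\thetamp) f(\thetamp)$ as in the base paper. Second, I would push the perturbation through: expand $f(\thetam + \epsilon) = f(\thetam) + f'(\thetam)\cdot \epsilon + \mathcal{O}(\epsilon^2)$ and likewise $f'(\thetam+\epsilon) = f'(\thetam) + f''(\thetam)\cdot\epsilon + \mathcal{O}(\epsilon^2)$, so that the zeroth-order (in $\ell$) part of $\tilde f$ becomes $f(\thetam) + f'(\thetam)\cdot\epsilon$ and the $\mathcal{O}(\ell)$ correction becomes $-\tfrac{\ell}{2}\bigl(f'(\thetam)f(\thetam) + f''(\thetam)f(\thetam)\cdot\epsilon + f'(\thetam)^2\cdot\epsilon\bigr)$, dropping $\mathcal{O}(\epsilon^2)$ terms. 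Third, substitute $f = -\nabla L$, $f' = -\nabla^2 L$, $f'' = -\nabla^3 L$ and recognize that $\nabla^2 L(\thetam)\nabla L(\thetam) + \nabla^3 L(\thetam)\nabla L(\thetam)\cdot\epsilon + (\nabla^2 L(\thetam))^2\cdot\epsilon = \tfrac12 \nabla \lVert \nabla L(\thetam) + \nabla^2 L(\thetam)\cdot\epsilon \rVert^2 + \mathcal{O}(\epsilon^2)$; this is the gradient-of-a-squared-norm identity that lets us integrate the modified vector field back into a modified loss. Combined with the fact that $f(\thetam) + f'(\thetam)\cdot\epsilon = -\nabla\bigl(L(\thetam) + \nabla L(\thetam)\cdot\epsilon\bigr) + \mathcal{O}(\epsilon^2)$, integrating $\tilde f = -\nabla \tilde L_{MK}$ yields $\tilde L_{MK}(\thetam) = L(\thetam) + \tfrac{\ell}{4}\lVert \nabla L(\thetam) + \nabla^2 L(\thetam)\cdot\epsilon \rVert^2 + \nabla L(\thetam)\cdot\epsilon$, as claimed.

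The main obstacle is being precise about the role of $\epsilon$ and the orders of approximation. There are two small parameters in play, $\ell$ and the size of $\epsilon$, and the statement keeps terms that are first order in $\ell$ and (implicitly) up to first order in $\epsilon$ while discarding $\mathcal{O}(\epsilon^2)$; I would need to state this truncation convention explicitly, and ideally note that the relevant quantity is really $\EE[\epsilon]$ and $\mathrm{Cov}(\epsilon)$ if one wants the expected modified loss, since $\epsilon$ is random and mean-zero (or mean-one-normalized) by construction of $p$. A secondary subtlety is the tensor bookkeeping: $\nabla^3 L$ is a $3$-tensor and the contraction $\nabla^3 L(\thetam)\nabla L(\thetam)\cdot\epsilon$ must be interpreted consistently (e.g., as $\partial_\theta(\nabla^2 L\cdot\epsilon)\cdot\nabla L$ vs.\ $\partial_\theta(\nabla^2 L\cdot\nabla L)\cdot\epsilon$), which matters for the gradient-of-norm identity to close exactly; I would verify that identity componentwise by differentiating $\tfrac12\lVert \nabla L + \nabla^2 L\cdot\epsilon\rVert^2$ directly. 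Everything else is the routine Taylor/BEA computation already sketched in the paper.
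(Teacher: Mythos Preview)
Your proposal is correct and follows essentially the same route as the paper's proof: both apply the BEA template of \citet{barrett2020implicit} to obtain $C_1(\thetamp)=-\tfrac12 f'(\thetamp)f(\thetamp)$, Taylor-expand $f$ and $f'$ at $\thetam+\epsilon$ around $\thetam$ to first order in $\epsilon$, and then use the gradient-of-squared-norm identity to integrate the modified vector field back into $\tilde L_{MK}$. Your write-up is in fact more explicit than the paper about the two-parameter truncation (first order in $\ell$, first order in $\epsilon$) and the tensor contractions involving $\nabla^3 L$, which the paper glosses over.
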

\begin{proof} Our proof follows the main theorem in \cite{barrett2020implicit} (Theorem 3.1).
We want a modified equation with correction terms of the form:
\begin{equation}
\label{eq:modified_f}
    \tilde{f}(\thetamp) = f(\thetamp) + \ell C_1(\thetamp).
\end{equation}

The Taylor series expansion of the true solution $\theta(t+h)$ is,
\begin{equation}
    \theta(t+h) = \theta(t) + \ell f(\theta) + \frac{\ell^2}{2} f'(\theta) f(\theta) + {\cal O}(\ell^3).
\end{equation}

Now, replacing Eq.~\ref{eq:modified_f} into $\theta(t+h)$ we get, 
\begin{samepage}
\begin{align}
\label{eq:real_theta}
    \theta_m&(t+h) = \theta(t) + \ell(f(\thetamp) + \ell C_1(\thetamp)) + \frac{\ell^2}{2}f'(\thetamp) f(\thetamp) \\\nonumber
    %
    %
    &=\theta(t) + \ell f(\thetamp) + \ell^2\left(C_1(\thetamp) + \frac{1}{2}f'(\thetamp) f(\thetamp) \right).
\end{align}
\end{samepage}

The numerical method with a first order Euler method using $f(\theta)$ for consistency is,
\begin{equation}
\label{eq:numerical_theta}
  \theta_{n+1} = \theta_n + \ell f(\theta_n).    
\end{equation}

To get $\theta_m(n\ell)=\theta_n$ for all $n$, we must have $\theta_m(t+\ell)$ matching the numerical method. Comparing like powers of $\ell$ in equations \ref{eq:real_theta} and \ref{eq:numerical_theta} yields recurrent relations for the correction functions:
\begin{align*}
    f(\theta_n) =& f(\thetamp), \ \ 
    C_1(\thetamp) + \frac{1}{2}f'(\thetamp) f(\thetamp) = 0.
\end{align*}

So the correction term $C_1$ becomes:
$
C_1(\thetamp) = -\frac{1}{2}f'(\thetamp) f(\thetamp).
$

For our algorithm, $f(\thetamp)=f(\thetam + \epsilon)$. Doing the first order Taylor expansion of this term yields,
$$
f(\thetam + \epsilon)=f(\thetam) + f'(\thetam) \cdot \epsilon,
$$
where since $\epsilon$ is a vector, the product is matrix-vector product. The correction terms become:
$$
f(\theta_n) = f(\thetam) + f'(\thetam) \cdot \epsilon
$$
and
\begin{align*}
C_1(\thetamp)=&-\frac{1}{2}f'(\thetam+\epsilon)f(\thetam+\epsilon) =-\frac{1}{2}(f'(\thetam) +  f''(\thetam)\cdot \epsilon)(f(\thetam) + f'(\thetam)\cdot \epsilon ) = \\
=&-\frac{1}{2}\left(f'(\thetam)f(\thetam) + f''(\thetam)f(\thetam)\cdot\epsilon +  f'(\thetam)^2\cdot\epsilon  \right)
+\mathcal{O}(\epsilon) 
\end{align*}

Finally, the modified vector field in Eq.~\ref{eq:modified_f} becomes:
\begin{align*}
    \tilde{f}(\thetamp) &= f(\thetam) + f'(\thetam) \cdot \epsilon 
    -\frac{\ell}{2}\left(f'(\thetam)f(\thetam) + f''(\thetam)f(\thetam)\cdot\epsilon +  f'(\thetam)^2\cdot\epsilon \right)
    +{\cal O}(\epsilon^2).
\end{align*}

Substituting the vector field definition, the first part is:
$
    f(\thetam) + f'(\thetam) \cdot \epsilon = -\nabla L(\thetam) - \nabla^2 L(\thetam) \cdot \epsilon,
$\\
and the second part is:
\begin{align*}
    f'(\thetam)f(\thetam) + f''(\thetam)f(\thetam)\cdot\epsilon + f'(\thetam)^2\cdot\epsilon &=\nabla (\nabla L(\thetam))\nabla L(\thetam) + (\nabla^3 L(\thetam))\nabla L(\thetam)\cdot\epsilon + (\nabla^2 L(\thetam))^2\cdot\epsilon\\
    & = \frac{1}{2}\nabla \lVert \nabla L(\thetam) + \nabla^2 L(\thetam) \cdot\epsilon \rVert^2 + \mathcal{O}(\epsilon^2)
\end{align*}

Removing the negligible $\mathcal{O}(\epsilon^2)$ term, the modified loss for learning rate $\ell$ is
\begin{align*}
    \tilde{L}_{MK}(\thetam) = L(\thetam) + \frac{\ell}{4} \lVert \nabla L(\thetam) + \nabla^2 L(\thetam)\cdot\epsilon \rVert^2 + \nabla L(\thetam) \cdot \epsilon.
\end{align*}
which concludes the proof.\end{proof}

\begin{corollary}
Our approach implicitly incorporates 
two new elements, the Hessian-based term \( - \nabla^2 L(\thetam) \cdot \epsilon \), which introduces a second-order characteristic to the optimization process, and a distortion-based gradient term.
Unlike traditional second-order methods that utilize an inverted Hessian to determine the direction of steepest descent, the direct application does not aim to pinpoint the exact descent direction; instead, it modulates the gradient update to reflect the underlying curvature of the loss surface. Stochastically adding the Hessian term in the penalizing norm should bias to a solution with not only small gradient norm, but also small Hessian norm. From the point of view of the modified loss, the MK algorithm introduces extra stochastic regularization that will offer a smoother navigation of the optimization landscape and bias toward flatter minima.

\end{corollary}

\newpage

\section{Stochastic sharpness aware minimization}\label{section:theory}

Conventional Sharpness-Aware Minimization (SAM) \citep{foret2020sharpness} aims to find parameters that not only minimize the training loss, $L(\theta)$, but also maintain a low loss in the vicinity of $\theta$, thereby leading to solutions that generalize better. SAM achieves this by considering both $L(\theta)$ and $L(\theta + \epsilon)$, where $\epsilon$ is a perturbation that maximizes the loss within a defined neighborhood of $\theta$.

In our scenario, the perturbation happens naturally via our stochastic approach. The implicit perturbation \( \epsilon \) reflects the variability during training and is not necessarily the worst-case perturbation. Therefore, we can consider the expected value of the loss due to the perturbation \( \epsilon \) within a defined neighborhood. The bound on the generalization loss would be more about the expected sharpness rather than the maximum sharpness.

\begin{restatable}[PAC-Bayesian Bound with Stochastic Weights]{lemma}{bayesian}
\label{lemma:bayesian}

A network is parameterized by an extended set of weights $\we^{(l)}$ per layer, $l\in[L]$, and the parameters $\theta$ the network operates in are the result of the stochastic aggregation defined in Eq.~\ref{eq:w}. Let $L_D(\theta)$ and $L_S(\theta)$ denote the true and empirical loss functions, respectively. Let $Q$ be the distribution of model parameters induced by the stochasticity in the weights, and let $\Omega_u(e)$ be the uniform distribution over the extended weight space.
For any data distribution $D$, number of samples $m$, training set $S \sim D$, and prior distribution $P$ on parameters $\thetae$, posterior distribution $Q$, for any $0 < \delta$, with probability $1-\delta$ over the draw of training data,
then the expected true loss under $Q$ can be bounded as follows~\citep{chatterji19generalization}:
    \begin{equation*}
        E_{\theta \sim Q}[L_D(\theta)] \leq E_{\theta \sim Q}[L_S(\theta)] + \sqrt{\frac{\text{KL}(Q\|P) + \log \frac{m}{\delta}}{2(m-1)}},
    \end{equation*}
    where $\text{KL}(Q\|P)$ is the Kullback-Leibler divergence between the distribution $Q$ and a prior distribution $P$.
    
    The expected empirical loss under $Q$ can be expressed as:
    \begin{align*}
    E_{\theta \sim Q}[L_S(\theta)] &= E_{p\sim \Omega_u(e)}[L_S(\thetamp)] \leq L_S(\thetam) + \Delta
    \end{align*}
    where
    \begin{align*}
      \Delta &= \left| E_{p\sim \Omega_u(e)}[L_S(\thetamp)] - L_S(\thetam) \right|,
    \end{align*}
    is the stochastic sharpness term that represents the deviation of the expected empirical loss under the random weights from the empirical loss of the mean weights. It captures the sensitivity of the empirical loss to fluctuations in the model parameters.

\end{restatable}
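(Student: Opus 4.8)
The plan is to assemble the claimed bounds from two standard ingredients, following the structure of McAllester-style PAC-Bayesian analysis. First, I would invoke the PAC-Bayes theorem exactly in the form cited from \citep{chatterji19generalization}: for any fixed prior $P$ over the extended weight space $\thetae$, chosen before seeing the sample, and any posterior $Q$ (here the distribution on the operative parameters $\theta$ induced by the stochasticity of the aggregation in Eq.~\ref{eq:w}), with probability $1-\delta$ over $S\sim D^m$,
\begin{equation*}
E_{\theta\sim Q}[L_D(\theta)] \le E_{\theta\sim Q}[L_S(\theta)] + \sqrt{\frac{\mathrm{KL}(Q\|P) + \log\frac{m}{\delta}}{2(m-1)}}.
\end{equation*}
This is the first displayed inequality and requires no new work beyond checking the hypotheses of the cited theorem (bounded loss, i.i.d. sample, $Q$ measurable w.r.t.\ the randomness); I would simply state these and cite.

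Second, I would identify the posterior $Q$ concretely. Since the extended weights $\we$ are fixed during a given evaluation and only the probability tensor $p$ is random, drawn coordinatewise from the normalized-exponential distribution of Eq.~\ref{eq:normalized_p}, the operative parameter vector is $\thetamp$, a deterministic function of $p$. Thus sampling $\theta\sim Q$ is the same as sampling $p$ and setting $\theta=\thetamp$. For the purpose of the sharpness statement the lemma replaces this particular $p$-distribution by the uniform distribution $\Omega_u(e)$ over the extended weight space (the relevant symmetric reference distribution whose mean is the uniform $p=\nicefrac1e\cdot\mathbf 1$, giving $\thetam$); under that identification $E_{\theta\sim Q}[L_S(\theta)] = E_{p\sim\Omega_u(e)}[L_S(\thetamp)]$. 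Then the bound $E_{p\sim\Omega_u(e)}[L_S(\thetamp)] \le L_S(\thetam) + \Delta$ is immediate from the triangle inequality with $\Delta := \bigl|E_{p\sim\Omega_u(e)}[L_S(\thetamp)] - L_S(\thetam)\bigr|$, which is essentially a tautology: we add and subtract $L_S(\thetam)$ and bound the difference by its absolute value. No convexity or concentration argument is needed here; $\Delta$ is simply \emph{named} as the stochastic sharpness quantity, and the content of the lemma is the decomposition, not a closed-form estimate of $\Delta$.

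The main subtlety — and the step I would be most careful about — is the consistency of the two distributions appearing in the statement: the PAC-Bayes inequality is stated for the posterior $Q$ induced by the \emph{actual} training randomness (normalized exponentials), while the empirical-loss decomposition is stated under $\Omega_u(e)$. To make the chain of inequalities valid I would either (i) take $Q$ itself to be the law of $\thetamp$ under $\Omega_u(e)$ and note that this is a legitimate posterior for the PAC-Bayes bound (the bound holds for \emph{any} data-independent-prior/arbitrary-posterior pair), so both displays refer to the same $Q$; or (ii) remark that the normalized-exponential law and $\Omega_u(e)$ share the same mean, so $\thetam$ is the natural center in both cases and the sharpness term $\Delta$ is defined with respect to whichever reference law is used, the argument being identical. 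Option (i) is cleanest and is what I would write. Finally I would remark that $\mathrm{KL}(Q\|P)$ can be controlled by choosing $P$ to be the same uniform-type distribution over $\thetae$, so that the divergence reflects only how far the learned extended weights $\we$ push the induced $\theta$-distribution from the prior; this connects the bound to the ``implicit regularization toward flat minima'' narrative but is not strictly required for the lemma as stated. The whole argument is short because the lemma is organizational — it packages the generalization bound into a (main term) + (expected-sharpness) form — so I would keep the write-up to the two invocations above plus the distribution-consistency remark.
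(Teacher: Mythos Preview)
Your proposal is correct and matches the paper's treatment: the paper provides no explicit proof of this lemma, instead citing \citep{chatterji19generalization} for the PAC-Bayes inequality and leaving the sharpness decomposition as a definitional tautology, which is precisely the two-step argument you outline. Your care about the consistency between $Q$ and $\Omega_u(e)$ (your option (i)) actually goes beyond what the paper addresses and is a welcome clarification.
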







Finally, the stochastic approach affects the convergence behavior of the algorithm, potentially leading to more stable but slower convergence (see Lemma~\ref{lemma:uniform_fallback} which also showcase the importance of the random probabilities).
\begin{restatable}[Reduced Learning Rate with uniform probabilities]{lemma}{UniformFallback}
\label{lemma:uniform_fallback}

Let \(\ell\) be the standard learning rate in a conventional gradient descent algorithm. If \( e > 0 \) represents the extension factor and $p=\left( \frac{1}{e}, \ldots, \frac{1}{e} \right)$ the effective learning rate w.r.t the virtual layer parameters $w$ is \( \frac{\ell}{e} \), that is:
$$
w \gets w - \frac{\ell}{e} (\nabla_w L(w)).
$$

\end{restatable}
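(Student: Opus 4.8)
\textbf{Proof proposal for Lemma~\ref{lemma:uniform_fallback}.}

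The plan is to unwind the definition of the aggregated kernel $\wmp$ under the special choice $p = (\tfrac1e,\dots,\tfrac1e)$ and then apply the chain rule. With uniform $p$ we have, by Eq.~\eqref{eq:w}, that $\wm = \tfrac1e\sum_{k\in[e]}\we_k$, so the virtual layer parameter $w$ that the network actually uses during the forward pass is exactly $w = \wm$. The forward pass therefore computes $L$ as a function of $\wm$, i.e. the loss seen by the optimizer is $L(\wm)$ where $\wm$ depends on the trainable variables $\we_1,\dots,\we_e$ through the linear map $\wm = \tfrac1e\sum_k \we_k$.

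First I would compute the gradient of $L(\wm)$ with respect to each $\we_k$. Since $\partial \wm/\partial \we_k = \tfrac1e\,\mathrm{Id}$ (entrywise, the Jacobian of the averaging map is $\tfrac1e$ times the identity), the chain rule gives $\nabla_{\we_k} L(\wm) = \tfrac1e\,\nabla_{w} L(w)\big|_{w=\wm}$ for every $k\in[e]$. Next I would write out the gradient-descent update on the actual trainable variables: $\we_k \gets \we_k - \ell\,\nabla_{\we_k}L(\wm) = \we_k - \tfrac{\ell}{e}\nabla_w L(\wm)$. Then I would track the induced update on the virtual parameter $w = \wm = \tfrac1e\sum_k \we_k$: averaging the $e$ update equations, the new value of $w$ is
\begin{equation*}
w^{+} = \frac1e\sum_{k\in[e]}\Bigl(\we_k - \frac{\ell}{e}\nabla_w L(w)\Bigr) = w - \frac{\ell}{e}\nabla_w L(w),
\end{equation*}
which is precisely the claimed update with effective learning rate $\ell/e$. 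Optionally I would remark that the same $\tfrac1e$ factor appears twice — once in forming $w$ from the $\we_k$ and once in back-propagating through that average — but only the net effect on $w$ matters, and this is a single factor of $\tfrac1e$ relative to training $w$ directly; with no stochasticity in $p$ there is no additional first-order term, so the dynamics are exactly those of the small model up to this learning-rate rescaling, which is the point of the lemma (contrast with Theorem~\ref{th:bea}, where the random $\epsilon$ contributes the extra Hessian and gradient terms).

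There is no serious obstacle here; the only thing to be careful about is the bookkeeping of which quantities are the optimizer's trainable variables ($\we_k$) versus the derived forward-pass parameter ($w = \wm$), and the fact that ``the effective learning rate w.r.t.\ the virtual layer parameters'' refers to the update $w^+ - w$ rather than to the raw update on any single $\we_k$. I would also note that since the statement is per-layer and the argument is entirely local to one layer's kernel, no global/compositional reasoning across layers is needed; the conclusion then holds simultaneously for every layer, and hence for the whole network the uniform-$p$ variant coincides with vanilla gradient descent on the base model run with learning rate $\ell/e$.
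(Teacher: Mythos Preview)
Your proposal is correct and follows essentially the same approach as the paper: both arguments identify $w=\wm=\tfrac1e\sum_k\we_k$ under uniform $p$, apply the chain rule to obtain $\nabla_{\we_k}L=\tfrac1e\nabla_w L$, write the gradient-descent update on each $\we_k$, and then average to recover the induced update $w\gets w-\tfrac{\ell}{e}\nabla_w L(w)$. If anything, your bookkeeping is slightly more explicit than the paper's (which packages the same chain-rule step via a Kronecker-product notation and the remark that shifting every entry of a vector by $c$ shifts its mean by $c$).
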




\begin{proof}

Our algorithm optimizes for \(\we\) instead of the conventional weight parameters \(w\).

When performing gradient descent with learning rate $\ell$ parameters are updated as follows,
$$
\we \gets \we - \ell \nabla_{\we} L(\we).
$$

Using the chain rule,
$$
\nabla_{\we} L(\we) = \nabla_w L(w) \cdot \frac{\partial w}{\partial \we},
$$
where \(\nabla_w L(w)\) represents the gradient of the loss function \(L\) with respect to \(w\), and \(\frac{\partial w}{\partial \we}\) is the partial derivative of \(w\) with respect to \(\we\), which in the given scenario is represented as a scaling matrix where each element is scaled by \(\frac{1}{e}\). This scaling matrix is applied element-wise to \(\nabla_w L(w)\) to obtain the gradient with respect to \(\we\).

Thus, when performing gradient descent with our algorithm with learning rate $\ell$ and uniform $p$, the implicit update of the extended kernel $\we$ with respect to the conventional parameters $w$ looks as follows,
$$
\we \gets \we - \frac{\ell}{e} (\nabla_w L(w) \otimes \mathbf{1}_e),
$$
where $\otimes$ is the Kronecker product and \( \mathbf{1}_e \) is the \( 1 \times e \) vector of ones.

Under the condition where the parameters \(p\) follow a uniform distribution as described in Eq.~\ref{eq:w}, the effective forward propagation step is implicitly using,
$$
w = \frac{1}{e} \sum_{k\in[e]} \we_{k},
$$
and since moving every element in a vector by a constant $c\in\mathbb{R}$ moves the average of that vector by $c$, we get that
$$
w \gets w - \frac{\ell}{e} (\nabla_w L(w)).
$$
\end{proof}

\newpage

\section{Related Work}
\label{sec:related}


Standard approaches for obtaining a model that is amenable to inference time constraints rely on a post-training processing stage via various methods. One class of popular methods concern model compression and quantization. A popular approach to model quantization is to truncate the model weights to limited bits of precision such as 4-bit quantization or 8-bit quantization \citep{banner2019post}. Typically quantizing the learned model weights leads to a loss in performance and one often needs an additional round of fine-tuning on the quantized weights \cite{fan2020training, bai2018proxquant, nagel2020up}. There have also been efforts to perform post training quantization without the need for additional finetuning \citep{banner2019post, cai2020zeroq}. Other approaches include hardware aware quantization \citep{wang2019haq}, quantization based on $k$-means \citep{gong2014compressing} and approaches exploring extreme one-bit quantization \citep{bai2020binarybert}. In a similar vein, approaches based on the lottery ticket hypothesis \citep{frankle2018lottery} aim to prune connections within a pretrained network which amounts to zeroing out entries of the learned weight matrices. See the survey of \citet{gholami2022survey} for an in-depth discussion of quantization.

An alternative to model compression is the idea of knowledge distillation \citep{buci2006model, hinton2015distilling}. Given a large pretrained teacher network, distillation involves training a smaller student network, typically of the same architecture as the larger one, to mimic the behavior of the larger network. Hence the larger model acts as a source for labeled supervision and it is often the standard practice to train the smaller model over the smoothed labels (the full logit distribution of the larger model). There have also been recent works exploring the idea of online distillation \citep{harutyunyan2023supervision} or co-distillation \citep{anil2018large} where the teacher and the student models are trained simultaneously.

Our proposed majority kernels have similarities to the classical notion of model ensembling. There is a rich body of work on principled techniques such as bagging \citep{breiman1996bagging} and boosting \citep{freund1997decision} for producing an ensemble of smaller base models. In recent years it has been observed that empirically, even a simple averaging of independently trained networks produces strong ensembles \citep{lakshminarayanan2017simple}. There have also been efforts to produce an ensemble of multiple models via a single round of training \citep{huang2017snapshot}. While an ensemble model leads to performance benefits, applying it in inference constrained settings still requires compression techniques such as knowledge distillation. Our proposed approach can be viewed as a way to avoid that by implicitly performing model ensembling in the  parameter space itself. A similar intuition underlies the standard practice of dropout regularization \citep{srivastava2014dropout}, but dropout does not produce a smaller model at the end of training. Our approach is complimentary to dropout, and can in fact be used in conjunction with it.

Finally, there have been recent approaches towards maintaining inference efficiency while simultaneously leveraging the capabilities of a larger model during training time. The sparse mixture-of-experts (MoE) architecture \citep{shazeer2017outrageously} aims to train a large model consisting of small experts and each example is routed to only a few of the experts. Another approach involves adapting a large pretrained network for many downstream tasks via adding low rank updates to the weight matrices \citep{hu2021lora}. Finally, the recent work of \citet{kudugunta2023matformer} aims to produce multiple models of various sizes as a result of a single training run. This is achieved by training over a loss averaged over the loss of the constituent models.

\newpage

\section{Adversarial Probabilities}\label{sec:adv-majority}
In this appendix, we will discuss a variation of the Majority Kernels algorithm presented in the paper, where we apply adversarial perturbations to the probabilities. We hope to shed light on some of the design choices, namely, the empirical reason behind sticking with simple stochastic random probability choices, which provide a simple yet effective training method.

The idea behind the adversarial probabilities is to optimize for the following loss in training,
$$
\min_{\we}\ \max_p L(\langle \we \rangle_p).
$$
We do so by making the probabilities learnable, setting them all to $1/e$ where $e$ is the expansion factor.
The full algorithm can be found below (Algorithm \ref{adv_only}),

\begin{algorithm}[h]
\caption{Adversarial Only Majority kernels.}
\begin{algorithmic}
\STATE For every layer $l$:  Initialize $\we\in\mathbb{R}^{n\times m \times e}$, $b\in\mathbb{R}^{m}$
\STATE initialize: $\ell \gets$ learning rate, $\epsilon \gets $ small positive
\WHILE{$s < \mathrm{max\_steps}$}
  \STATE $B \gets $\texttt{NewBatch}()
  \STATE $p \gets 1/e^{m\times e}$\\
  \STATE gradient$_p \gets$ \texttt{ComputeGradient}($\langle \we \rangle_p$,$b$,$B$)\\
  \STATE $p \gets p + \epsilon\, $gradient$_p$ \\
  \STATE $p \gets $\texttt{Normalize}($p$)\\
  \STATE gradient$_{\we,b} \gets$\texttt{ComputeGradient}($\langle \we \rangle_p$,$b$,$B$)\\
  \STATE $\we$, $b \gets \we$, $b - \ell\, $gradient$_{\we,b}$ \\
  \STATE $s+=1$
\ENDWHILE
\STATE return $\langle \we \rangle$, $b$ \algorithmiccomment{Return trained parameters for inference}
\end{algorithmic}\label{adv_only}
\end{algorithm}

Empirical results showed that while performing great at the beginning of training, this algorithm led to over-fitting later on. This was the case also when trying to learn the probabilities as part of the model, i.e., train with the following loss,
$$
\min_{\we,p} L(\langle \we \rangle_p).
$$
We believe that the over-fitting happens from the kernels becoming equal, which leads to equal adversarial probabilities (uniform), which is equivalent to training with low learning rate (See Lemma~\ref{lemma:uniform_fallback}). To address this, we introduced a random element to the $p$ to prevent the kernels from becoming equal by adding randomness if the adversarial probabilities are equal. This led to the algorithm reported in Table~\ref{tab:cnn_algorithm}. The algorithm explain below.

\begin{algorithm}[h]
\caption{Adv-Majority kernels.}
\begin{algorithmic}
\STATE For every layer $l$:  Initialize $\we\in\mathbb{R}^{n\times m \times e}$, $b\in\mathbb{R}^{m}$
\STATE initialize: $\ell \gets$ learning rate, $\epsilon \gets $ small positive
\WHILE{$s < \mathrm{max\_steps}$}
  \STATE $B \gets $\texttt{NewBatch}()
  \STATE $p \gets 1/e^{m\times e}$\\
  \STATE gradient$_p \gets$ \texttt{ComputeGradient}($\langle \we \rangle_p$,$b$,$B$)\\
  \STATE $p \gets p + \epsilon\, $gradient$_p$ \\
  \STATE $p \gets $\texttt{Normalize}($p$)\\
  \STATE $u \gets $KL\_Divergence($p$, $1/e^{m\times e}$) / $\log e$
  \STATE $random_p \gets \texttt{NormalizedExponentialRnd}()$
  \STATE $p \gets u \cdot p  + (1-u) \cdot random_p$
  \STATE gradient$_{\we,b} \gets$\texttt{ComputeGradient}($\langle \we \rangle_p$,$b$,$B$)\\
  \STATE $\we$, $b \gets \we$, $b - \ell\, $gradient$_{\we,b}$ \\
  \STATE $s+=1$
\ENDWHILE
\STATE return $\langle \we \rangle$, $b$ \algorithmiccomment{Return trained parameters for inference}
\end{algorithmic}\label{adv_mj}
\end{algorithm}

Figure~\ref{fig:adv_analysis} shows the difference between the learning curves with and without randomness (i.e., Algorithm~\ref{adv_only} vs Algorithm~\ref{adv_mj}). It is easy to see that the randomness prevents the over-fitting and may lead a slightly better algorithm than the Majority Kernels; However, we should mention that this algorithm has high overhead compared to regular training, as it is calculating the gradient twice, and thus, this algorithm is not intended as a primary contribution.

\begin{figure}[h]
    \centering
    \includegraphics[scale=0.25]{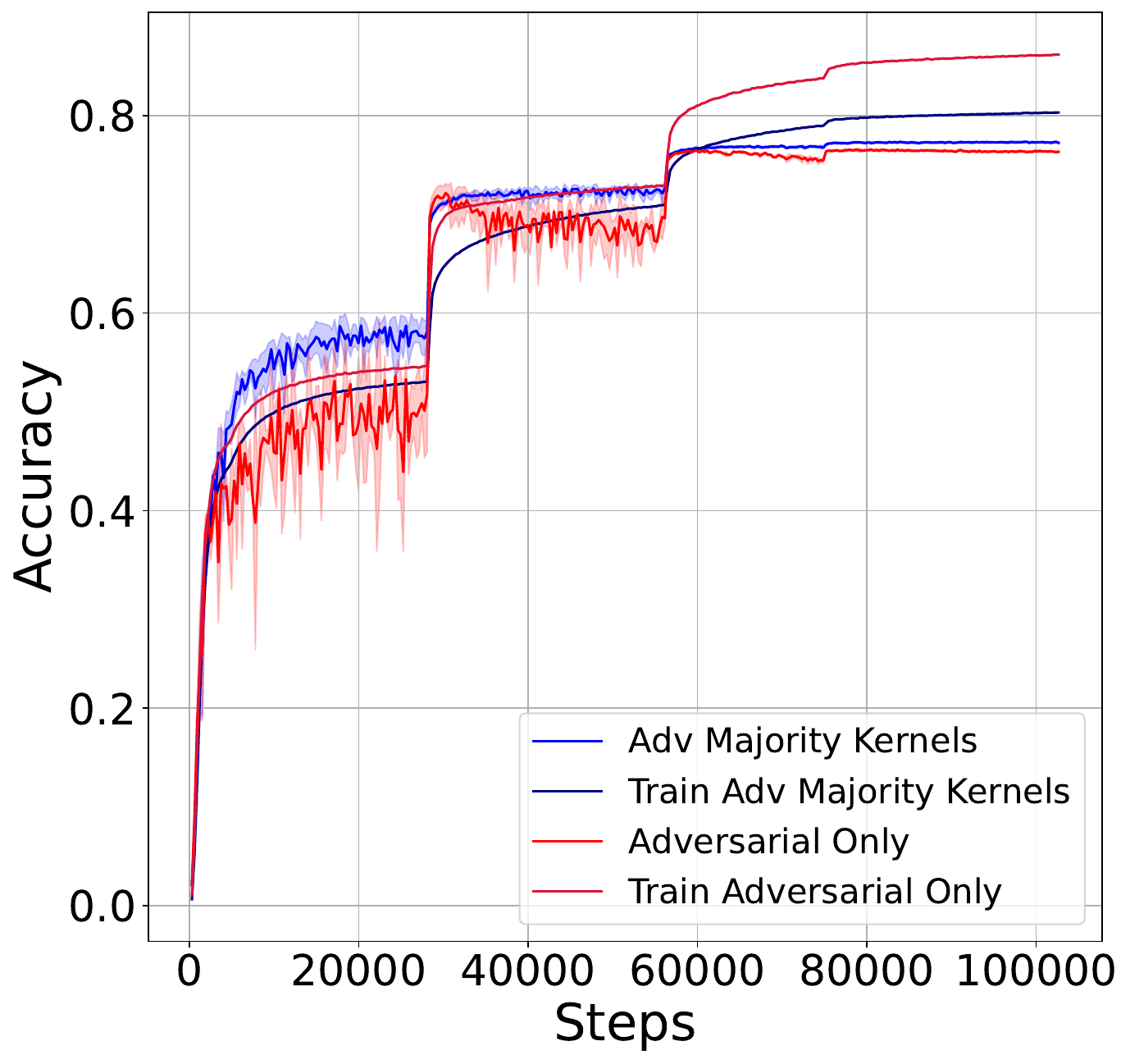}
    \caption{Train and eval curves for algorithms~\ref{adv_only} and~\ref{adv_mj}}
    \label{fig:adv_analysis}
\end{figure}

\section{Glue Experiment Breakdown}\label{apdx:t5_experiment}
In this appendix, we provide details of the Glue experiment, starting with Table~\ref{tbl:full_glue}, which breaks down the performance of various tasks against vanilla training.

\begin{table*}[htb]
\hspace{-30pt}
\begin{tiny}
\begin{tabularx}{1.15\textwidth}{cXXXXXXXXXXXXX}
\toprule
Model & Glue avg & COLA Matthew's & SST acc & MRPC f1 & MRPC acc & STS-b pearson & STS-b \mbox{spearman} & qqp acc & qqp f1 & MNLI-m & MNLI-mm & QNLI & RTE\\
\midrule
\textsc{Baseline} & 80.3$\pm$0.1 & 36.88$\pm$1 & 92.43$\pm$0.2 & 90.85$\pm$0.2 & 87.58$\pm$0.3 & 88.17$\pm$0.3 & 88.03$\pm$0.3 & 88.02$\pm$0.1 & 91.16$\pm$0.1 & 83.96$\pm$0.1 & 83.34$\pm$0.1 & \textbf{90.13}$\pm$0.2 & 72.44$\pm$0.9 \\
\textsc{Majority Kernels} & \textbf{80.9}$\pm$0.3 & \textbf{39.69}$\pm$1.7 & \textbf{92.77}$\pm$0.2 & \textbf{91.39}$\pm$0.2 & \textbf{88.23}$\pm$0.2 & \textbf{88.73}$\pm$0.4 & \textbf{88.71}$\pm$0.1 & 88.03$\pm$0.1 & 91.17$\pm$0.1 & 83.9$\pm$0.2 & \textbf{83.66}$\pm$0.3 & 89.75$\pm$0.5 & 73.04$\pm$0.9 \\ \bottomrule
\end{tabularx}
\end{tiny}
\caption{Performance of the various models on downstream tasks (Glue tasks).}
\label{tbl:full_glue}
\end{table*}

In addition, Figures~\ref{fig:glue_cola},~\ref{fig:glue_mrpc},~\ref{fig:glue_rte},~\ref{fig:glue_sst2} and \ref{fig:glue_stsb} show eval curves on various tasks revealing that our algorithm achieves peak faster than vanilla training. In addition, our algorithm is less prune to overfitting when over trained.

\begin{figure}[h]
    \centering
    \includegraphics[scale=0.25]{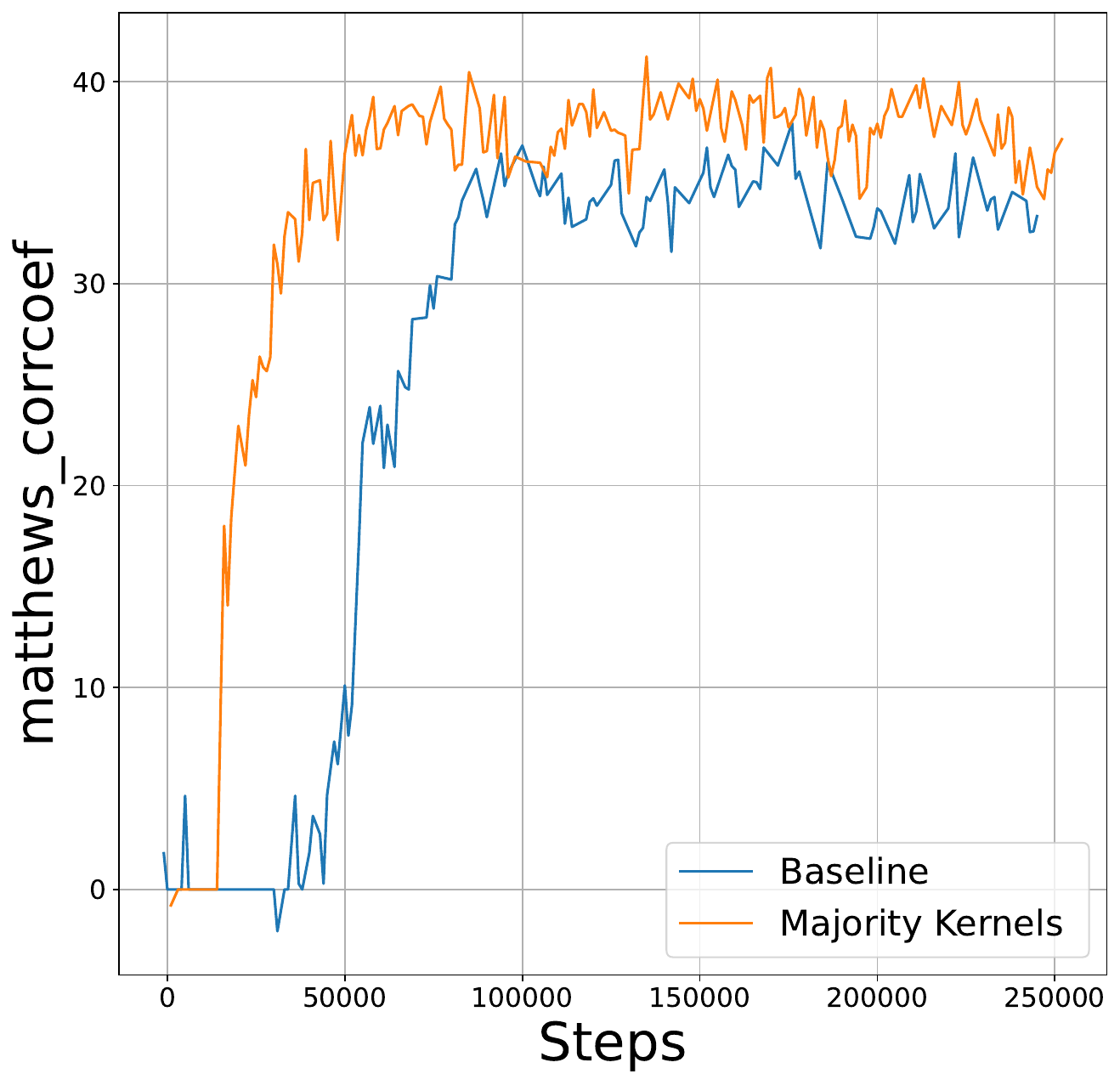}
    \caption{Eval curves for our algorithm compared to vanilla training on Glue Cola}
    \label{fig:glue_cola}
\end{figure}

\begin{figure}[h]
    \centering
    \includegraphics[scale=0.25]{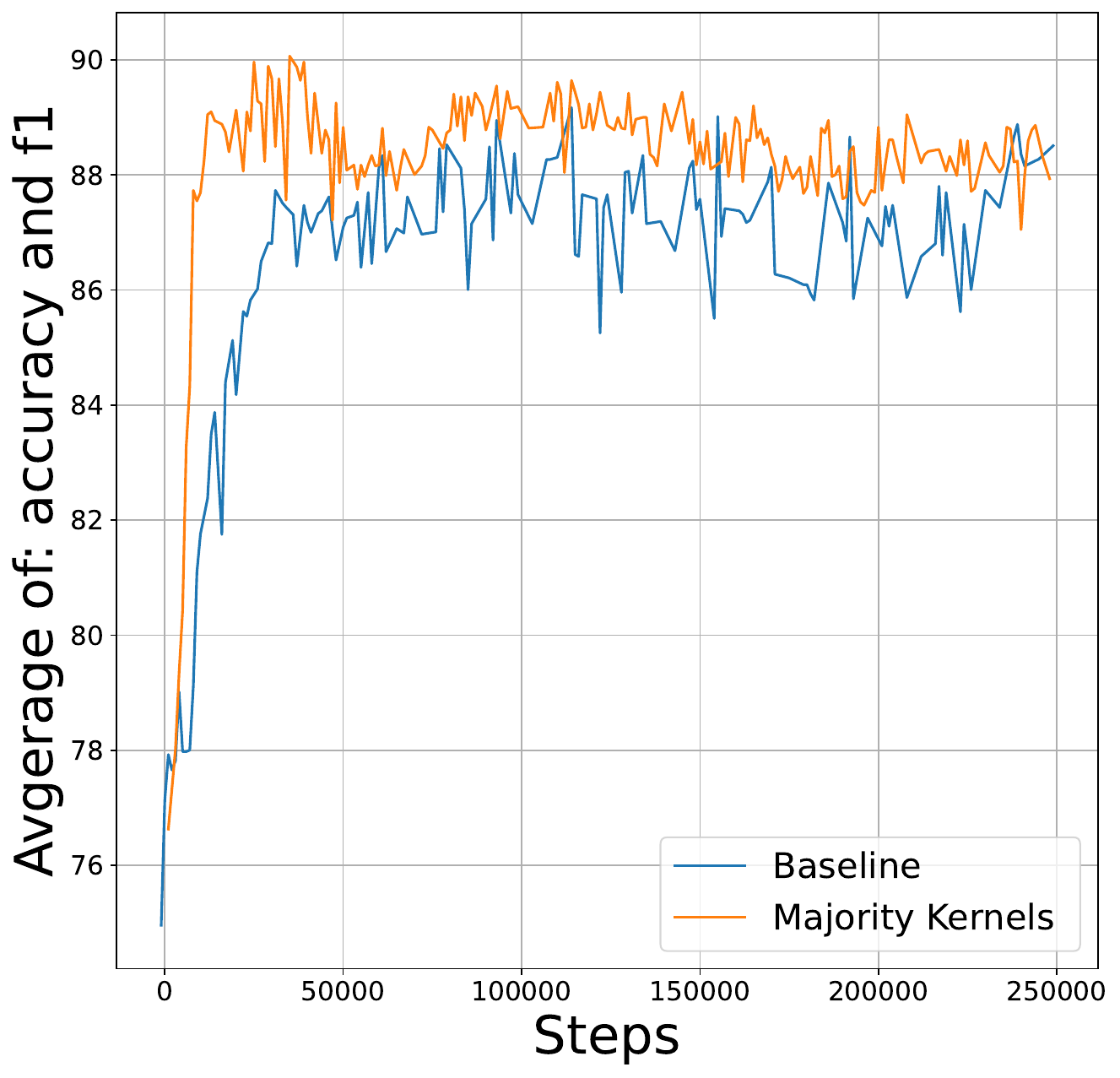}
    \caption{Eval curves for our algorithm compared to vanilla training on Glue MRPC}
    \label{fig:glue_mrpc}
\end{figure}

\begin{figure}[h]
    \centering
    \includegraphics[scale=0.25]{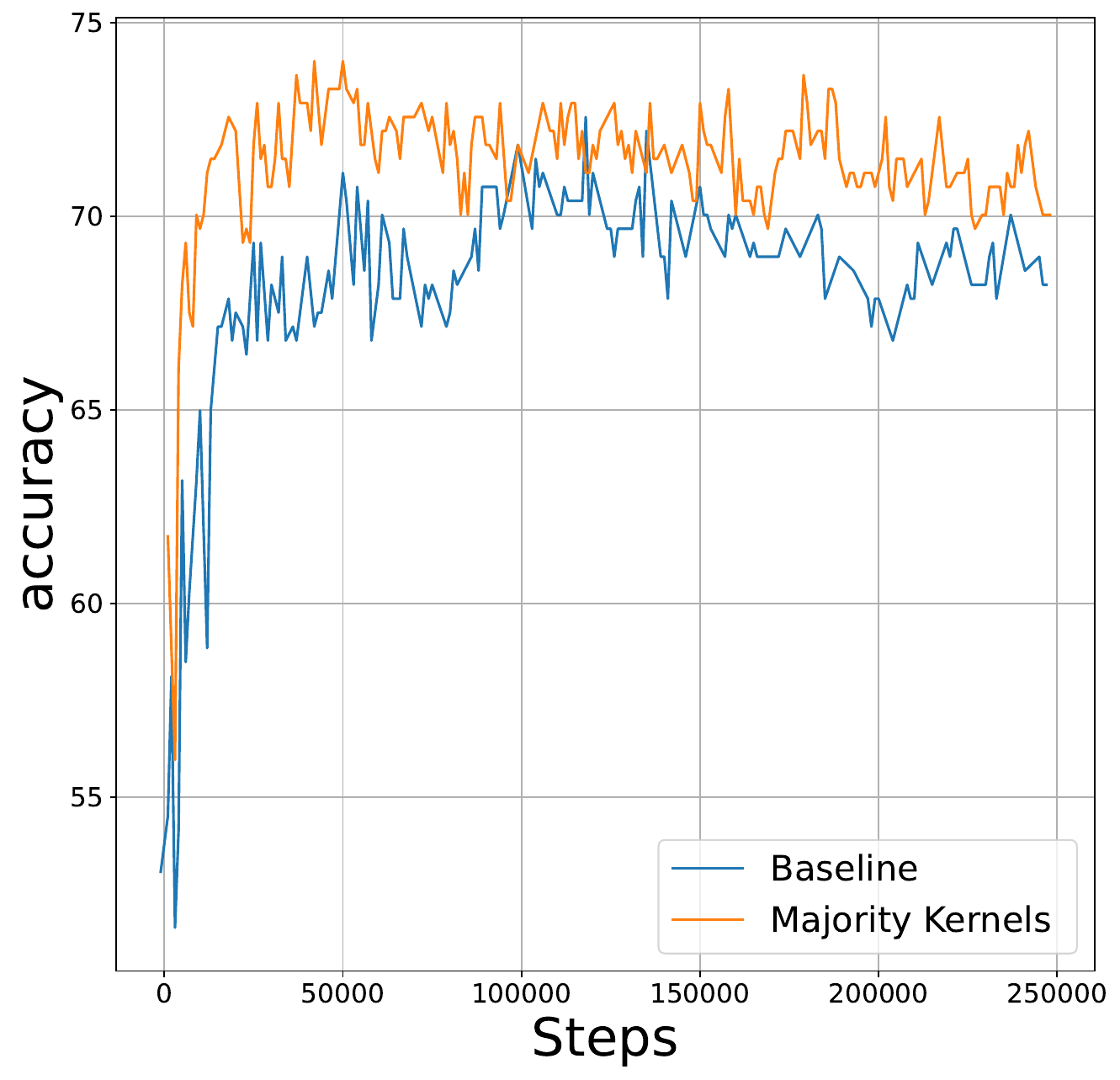}
    \caption{Eval curves for our algorithm compared to vanilla training on Glue RTE}
    \label{fig:glue_rte}
\end{figure}

\begin{figure}[h]
    \centering
    \includegraphics[scale=0.25]{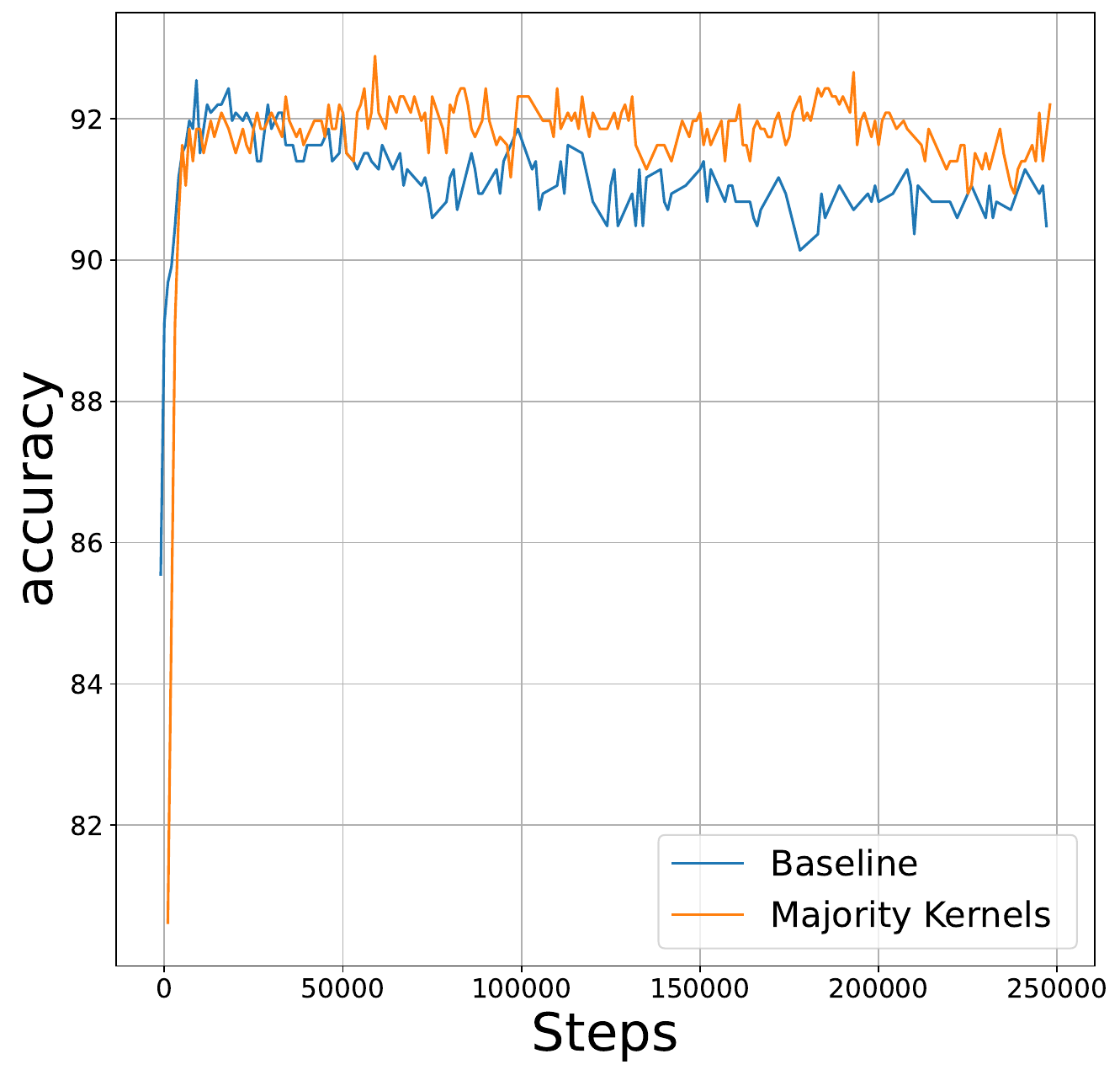}
    \caption{Eval curves for our algorithm compared to vanilla training on Glue SST2}
    \label{fig:glue_sst2}
\end{figure}

\begin{figure}[h]
    \centering
    \includegraphics[scale=0.25]{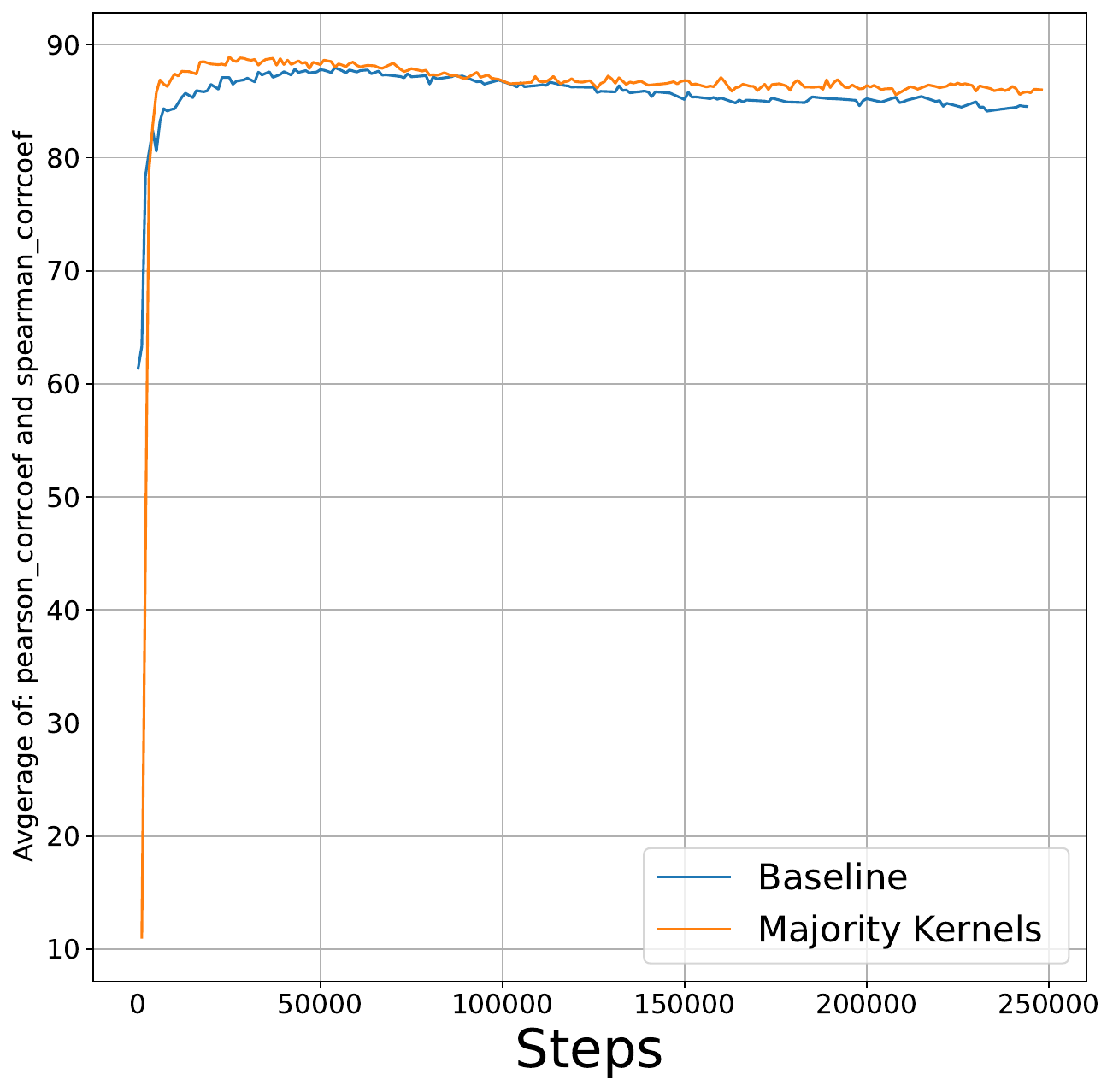}
    \caption{Eval curves for our algorithm compared to vanilla training on Glue STSb}
    \label{fig:glue_stsb}
\end{figure}

Finally, in \cite{wortsman2022model}, the authors discover that no alignment is needed between kernels when finetuning multiple times provided that we finetune on the same pretrained model. The authors evaluate performance on T5, with size configuration that matches ours but with higher expansion scale (more kernels to average). Table~\ref{tab:t5_soup_compare} below shows performance comparison on GLUE tasks in which the authors publish performance.

\begin{table}[h]
    \centering
    \begin{scriptsize}
    \begin{tabular}{c|c|c|c|c}
    \toprule
         \textbf{Algorithm} & \textbf{MRPC} & \textbf{RTE} & \textbf{CoLA} & \textbf{SST-2}  \\ \midrule
         \textsc{Baseline}  & 89.216 &  72.443  & 36.883 & 92.433 \\
         \textsc{Majority Kernels} & \textbf{89.816} & \textbf{73.043} & 39.696 & \textbf{92.776}\\
         \textsc{Model Soups - uniform} & 82.7 & 61.7 & 10.4 & 91.1\\
         \textsc{Model Soups - greedy}  & 89.7 & 70 & \textbf{43} & 91.7\\
    \bottomrule
    \end{tabular}
    \end{scriptsize}
    \vspace{0.5cm}
    \caption{Results on GLUE language tasks for our algorithm compared Model Soups (Table J.1 in \cite{wortsman2022model}).}
    \label{tab:t5_soup_compare}
\end{table}

Our algorithm have greater boost on most tasks with only averaging three kernels. While the algorithm mentioned above share similarity with ours in that it averages multiple kernels to create one inference one, one substantial difference is that our algorithm does not require the compute of finetuning multiple times, nor the additional engineering complexity of finding the right step where performance peaks for early stopping. We require one run with a slight overhead per step, and our evaluation is continuous.

\section{The Subset Selection Baseline}
\label{sec:subset}

We can view the shrinking of model dimension overparameterization as a combinatorial subset selection algorithm, and one of the predominantly used subset selection method is based on submodular maximization \citep{Nemhauser1978Submod,fujishige2005submodular}. To describe our method we first focus on a 1-layer network, i.e., $f(x) = v^T \sigma(W \cdot x)$ where $x \in \mathbb{R}^d, W \in \mathbb{R}^{m \times d}$ and $v \in \mathbb{R}^m$. For a given overparameterization factor $e > 1$, we initialize the network with parameters $\{v_0, U_0\}$ where $v_0 \in \mathbb{R}^m$ and $U_0 \in \mathbb{R}^{e\cdot m \times d}$. At time $t$, before each step of gradient update, i.e., a forward and backward pass, we first invoke a combinatorial subset selection procedure to select the best $m$ rows (neurons) out of  the $e \cdot m$ rows in $U_t$. At the end of training we again invoke the subset selection procedure to select the best $m$ rows to output the final network. The above approach can be easily extended to deeper networks by independently invoking the subset selection procedure for each hidden layer in the network.

We next describe the subset selection procedure. The core idea stems from the fact that we should aim to select neurons that have the most utility, i.e., achieve low loss overall and at the same time aim to avoid selecting redundant neurons, i.e., keep the selected network small. Hence we need to balance notions of utility and diversity, a setting tailor made for submodular optimization. Given $U \in \mathbb{R}^{e\cdot m \times d}$, for each row $i \in [e \cdot m]$ let $u(i)$ denote it's perceived utility. Furthermore let $s(i,j)$ denote the cosine similarity between rows $i$ and~$j$. Then for a given subset $S$ of the rows we consider the following pairwise submodular objective that evaluates the effectiveness of $S$
\begin{align}
    f(S) = \alpha \sum_{i \in S} u(i) - \beta \sum_{(i,j) \in S} s(i,j)
\end{align}
where $\alpha, \beta$ are hyperparameters. By appropriate choices of the parameters and the similarity functions, it can be shown that the above objective is both submodular and monotonically non-decreasing. Note that evaluating $f(S)$ involves $O(|S|^2)$ computation due to the presence of pairwise terms. This can be computationally prohibitive for layers that have thousands of neurons. Hence as a practical approximation we first consider a $t$-nn graph $G = (V,E)$ over the $k \cdot m$ rows where each row is only connected to its $t$ nearest neighbors. In our experiments we pick $t$ to be a small value ($t=10$). Furthermore, we use the norm of row $i$ as a proxy for the utility of neuron $i$. Let $c$ be the constant term defined as $c = \max_{\ell} \sum_{j: (\ell,j) \in E} s(\ell, j)$. Then we define the utility as $u(i) = \|U_i\| + c$. Hence our final objective is as follows
\begin{align}
\label{eq:submod}
    S^* &= \argmin_{S: |S|=m} f(S)\\
    f(S) &= \alpha \sum_{i \in S} u(i) - \beta \sum_{\substack{(i,j) \in S\\ (i,j) \in E}} s(i,j).
\end{align}

It is easy to see that the above is a monotone submodular objective for which a simple greedy algorithm achieves a $1 - \frac{1}{e}$-approximation \citep{Nemhauser1978Submod}. The full training procedure based on the above approach is described in \cref{subset_algo}.

\begin{algorithm}
\caption{The subset selection algorithm for training.}
\begin{algorithmic}
\STATE For every layer $r$:  Initialize $W_r\in\mathbb{R}^{n_r\times m_r \times e}$, $b_r\in\mathbb{R}^{m_r \times e}$.
\WHILE{$step < \mathrm{max\_steps}$}
  \STATE $B \gets $New Batch
  \STATE $\forall r$, compute the subset $S_r$ of $m_r$ rows via the greedy algorithm for the objective in \cref{eq:submod}.
  \STATE Train one step with parameters $W_r[S_r]$, $b_r[S_r]$ and batch $B$.
  \STATE $\forall r$: update $W_r$, $b_r$
  \STATE $step~=~step + 1$
\ENDWHILE
\STATE {\bf return} $W^m$ by again invoking the greedy algorithm for submodular optimization for each layer.
\end{algorithmic}\label{subset_algo}
\end{algorithm}

\newpage

\section{Experiments on Fully Connected Networks}\label{sec:experiments-dnn-app}
We consider training of vanilla feedforward networks on the CIFAR-10 dataset \citep{krizhevsky2009learning}. In this setting the overparameterization is in terms of the expanded width of each hidden layer. In our experiments the expansion factor for overparameterization is set to $e=3$. We will compare the following algorithms,

\begin{itemize}
    \item \textbf{\textsc{Baseline}}. A standard feedforward network training.
    \item \textbf{\textsc{Majority Kernels}}. Training of an overparameterized network via MK with $e=3$ .
    \item \textbf{\textsc{Ensemble-Baseline}}. This baseline assesses true ensemble performance, setting the achievable performance ceiling. We train and ensemble three ($e=3$) independent models.
    \item    \textbf{\textsc{Distilled-Baseline}}. Assesses standard knowledge distillation to compress the model produced by the ensemble baseline to the original model architecture.
    \item \textbf{\textsc{Subset-Baseline}}. A baseline based on discrete optimization. The method treats model dimension reduction as a subset selection problem, commonly addressed through submodular maximization \citep{Nemhauser1978Submod,fujishige2005submodular}. This baseline is described in detail in Appendix~\ref{sec:subset}. Note that due to the invocation of a combinatorial procedure for each layer this baseline is computationally much more expensive and is impractical beyond simple architectures.

\end{itemize}
\vspace{0.25cm}

For each algorithm, we hypertune the learning rate by training with learning rates in $0.001 \times 1.5^i$ for $i \in [-4, 5]$ when the optimizer is the Adam optimizer. Similarly, when the optimizer is SGD we consider the range of learning rates to be in $0.025 \times 1.5^i$ for $i \in [-4, 5]$.
In each case we pick the best performing learning rate on a separate validation set. Finally, we report the test accuracy for the chosen learning rate.

We run the experiment on various architecture topologies:
\begin{itemize}
    \item $\mathcal{A}_1$ One hidden layer with 100 neurons.
    \item $\mathcal{A}_2$ Two hidden layers with $\{200, 100\}$ neurons.
    \item $\mathcal{A}_3$ Three hidden layers with $\{400, 200, 100\}$ neurons.
\end{itemize}

\noindent \textbf{Results:} The results are presented in Table~\ref{cifar10}. We see that across the three topologies, MK consistently outperforms the baseline and distilled baselines. Furthermore, it achieves performance comparable or better than the much more expensive subset selection based combinatorial approach. Our algorithm increases the training time on CPU by 16.67\% which is a negligible increase compared to the distilled baseline and the subset one.

\begin{table*}[ht]
\centering
\hspace{-11pt}\begin{minipage}{0.5\textwidth}
    \centering
    \begin{small}
    \begin{tabular}{cccc|}
    \toprule
\textbf{Model} & \textbf{Architecture} & \textbf{Optimizer} & \textbf{Test accuracy}\\
\midrule
$\mathcal{A}_1$ & [100] & Adam & $51.79 \pm 0.21$  \\
\rowcolor{evenlightergray} ensemble-$\mathcal{A}_1$ & [100] & Adam & $52.27 \pm 0.12$ \\
distilled-$\mathcal{A}_1$ & [100] & Adam & $48.68 \pm 0.86$  \\
Subset-$\mathcal{A}_1$ & [100] & Adam & $52.09 \pm 0.16$ \\
\rowcolor{lightergray} Majority-$\mathcal{A}_1$ & [100] & Adam & $52.01 \pm 0.10$   \\
\midrule
$\mathcal{A}_2$ & [200, 100] & Adam & $52.99 \pm 0.09$  \\
\rowcolor{evenlightergray} ensemble-$\mathcal{A}_2$ & [200, 100] & Adam &  $54.36 \pm 0.08$  \\
 distilled-$\mathcal{A}_2$ & [200, 100] & Adam &  $52.94 \pm 0.26$  \\
 Subset-$\mathcal{A}_2$ & [200, 100] & Adam &   $53.69 \pm 0.20$ \\
\rowcolor{lightergray} Majority-$\mathcal{A}_2$ & [200, 100] & Adam &  $54.29 \pm 0.20$ \\
\midrule
$\mathcal{A}_3$ & [400, 200, 100] & Adam & $53.79 \pm 0.18$ \\
\rowcolor{evenlightergray} ensemble-$\mathcal{A}_3$ & [400, 200, 100] & Adam &  $55.62 \pm 0.15$   \\
 distilled-$\mathcal{A}_3$ & [400, 200, 100] & Adam & $55.26 \pm 0.32$  \\
 Subset-$\mathcal{A}_3$ & [400, 200, 100] & Adam  & $54.58 \pm 0.15$  \\
\rowcolor{lightergray} Majority-$\mathcal{A}_3$ & [400, 200, 100] & Adam &  $55.04 \pm 0.12$ \\\bottomrule
    \end{tabular}
    \end{small}
\end{minipage}\hspace{28pt}
\begin{minipage}{0.39\textwidth}
    \centering
    \begin{small}
    \begin{tabular}{cccc}
    \toprule
     \textbf{Optimizer} & \textbf{Test accuracy}\\
    \midrule
 SGD &  $51.24 \pm 0.16$  \\
 \rowcolor{evenlightergray} SGD &   $52.55 \pm 0.52$ \\
 SGD &  $48.69 \pm 0.33$  \\
 SGD &  $52.10 \pm 0.20$  \\
\rowcolor{lightergray} SGD &  $51.82 \pm 0.15$ \\
 \midrule
 SGD &  $52.64 \pm 0.34$ \\
  \rowcolor{evenlightergray} SGD &  $54.87 \pm 0.15$  \\
 SGD &  $51.87 \pm 0.89$  \\
 SGD &  $53.90 \pm 0.15$ \\
 \rowcolor{lightergray} SGD &  $54.21 \pm 0.24$  \\
 \midrule
 SGD &  $53.30 \pm 0.17$  \\
  \rowcolor{evenlightergray} SGD &  $56.56 \pm 0.27$   \\
  SGD &  $54.21 \pm 0.31$ \\
 SGD  & $54.57 \pm 0.34$  \\
 \rowcolor{lightergray} SGD &  $54.97 \pm 0.31$  \\
\bottomrule
    \end{tabular}
    \end{small}
\end{minipage}
\caption{Results of the various algorithms trained on CIFAR-10 with batch size 256.}\vspace{-5pt}
\label{cifar10}
\end{table*}

\begin{figure}[ht]
\begin{minipage}{1\textwidth}
    \centering
    \includegraphics[scale=0.21]{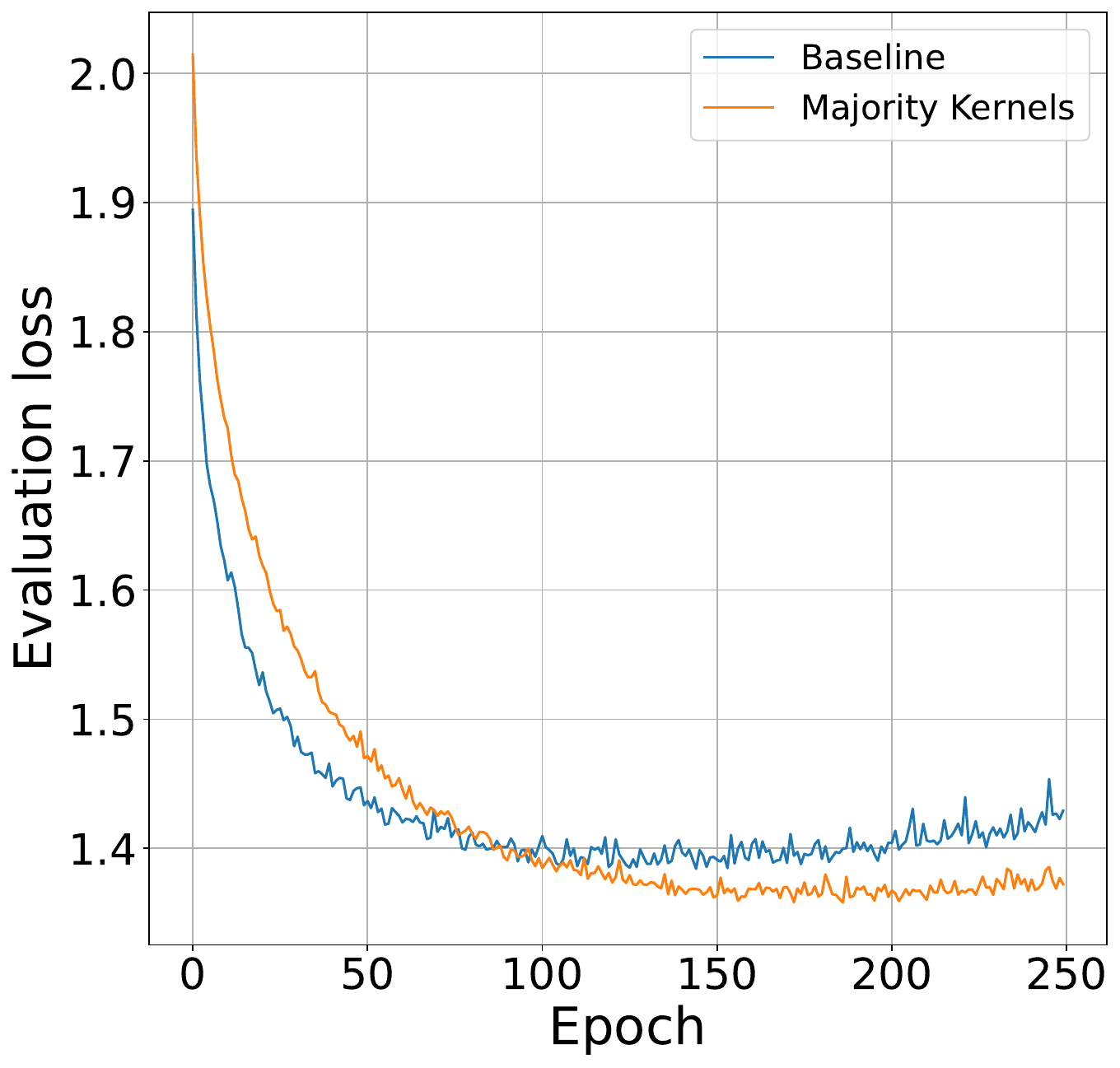}
    \caption{Test loss curves for training CIFAR-10 on Majority Kernels vs Baseline on the $\mathcal{A}_1$ architecture. \vspace{0.25cm}}
    \label{fig:dnn_training}
\end{minipage}\hspace{20pt}
\end{figure}

In the Figure~\ref{fig:dnn_training} below, we present the loss curves throughout training of the vanilla training vs the majority kernels on the $\mathcal{A}_1$ architecture. 
Notice that with the added regularization, MK often needs more steps to converge.

\end{document}